\pdfoutput=1
\documentclass[11pt]{article}
\usepackage[preprint]{acl}
\usepackage{times}
\usepackage{latexsym}
\usepackage[T1]{fontenc}
\usepackage[utf8]{inputenc}
\usepackage{microtype}
\usepackage{inconsolata}
\usepackage{graphicx}
\usepackage{amsmath,amssymb,amsthm}
\usepackage{booktabs}
\usepackage{multirow}
\usepackage{xcolor}
\usepackage{colortbl}
\usepackage{placeins}
\usepackage{adjustbox}
\usepackage{listings}
\theoremstyle{definition}
\newtheorem{proposition}{Proposition}

\newcommand{\codeurl}{\url{https://github.com/hyojuun/lfbb}}
\newcommand{\ind}[1]{\mathbf{1}\!\left[#1\right]}
\newcommand{\G}{\mathcal{G}}
\newcommand{\V}{\mathcal{V}}

\newcommand{\Method}{Ours}
\newcommand{\M}{\texttt{[M]}}
\newcommand{\KL}{\mathrm{KL}}

\title{Watermarking Diffusion Language Models with Learned Foresight and Balanced Bands}

\author{\mdseries Hyojun Kim, Hyeseon An, Yo-Sub Han$^{\dagger}$ \\
  Yonsei University, Seoul, Republic of Korea \\
  \texttt{\{\href{mailto:hyojun@yonsei.ac.kr}{hyojun}, \href{mailto:hsan@yonsei.ac.kr}{hsan}, \href{mailto:emmous@yonsei.ac.kr}{emmous}\}@yonsei.ac.kr}
}

\newcommand{\correspondingfootnote}{
    \let\oldthefootnote=\thefootnote
    \renewcommand{\thefootnote}{}
    \footnotetext{$\dagger$ Corresponding author.}
    \let\thefootnote=\oldthefootnote
}

\begin{document}
\maketitle
\correspondingfootnote

\begin{abstract}
Green-red watermarks favor tokens from a green list that a secret key derives from the preceding tokens. Diffusion language models, however, often choose a token before its preceding tokens, and the token later shapes its neighbors' green lists. Prior work therefore tilts each choice to raise the expected number of green tokens at its own and its neighbors' positions. Building on this objective, we first learn two components that prior work keeps fixed: vocabulary bands that serve as green lists, separate competing candidates and keep the null distribution exact, and a foresight predictor that estimates how each candidate shifts its neighbors' predictions. Second, we count green tokens as the detector does, once per distinct context-token pair, so repetition earns nothing. On WaterBench with LLaDA-8B-Instruct, each learned component raises the z-score at equal distortion, and their gains are nearly additive. With a strong watermark, ours detects at least 96\% of responses at a 1\% false-positive rate while matching the LLM-judge score of unwatermarked text. Ours also detects more responses on Dream and as many or more on code and question answering without a significant loss in task accuracy. Code: \codeurl{}.
\end{abstract}

\section{Introduction}
\label{sec:intro}
\begin{figure}[!t]
\centering
\includegraphics[width=\columnwidth]{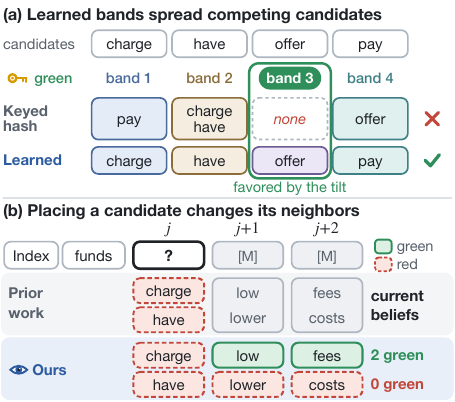}
\caption{The two learned components at one masked position (illustrative tokens). (a) Learned bands spread the competing candidates over the bands, which a keyed hash may not. (b) Predicted post-placement beliefs let foresight favor \emph{charge}, which is red but makes both neighbors green.}
\label{fig:teaser}
\end{figure}

Watermarks let the provider of a language model recognize its outputs later \citep{kirchenbauer2023watermark,dathathri2024synthid}. In green-red watermarks \citep{kirchenbauer2023watermark}, a secret key and the preceding tokens select a \emph{green list}, a subset of the vocabulary, at each position. The generator favors green tokens, and a detector that knows the key tests whether a text contains more green tokens than chance would give. This design relies on two conditions that autoregressive generation guarantees: when a token is sampled, the tokens that select its green list are already fixed, and the tokens whose green lists it helps select do not exist yet.

Diffusion language models (DLMs) such as LLaDA \citep{nie2025llada} and Dream \citep{ye2025dream}, a growing alternative to autoregressive models, break both conditions. They generate a response by iteratively unmasking a canvas of mask tokens \citep{austin2021d3pm,sahoo2024mdlm}, and they \emph{commit} tokens in an order that is not strictly left to right. In the default order of LLaDA-8B-Instruct, 39\% of the tokens are committed while part of their context is still masked, and 37\% after one of the next two tokens, whose context they are part of (Appendix~\ref{app:decoding}). A watermark must therefore choose a token before its green list is known, and this choice also shapes the green lists of its neighbors.

\citet{gloaguen2026watermarking} handle both cases with the model's predictions for the masked positions, its \emph{beliefs}. Their watermark, which we call DLM-KGW, maximizes the expected share of green tokens under a KL constraint, so it favors a candidate that is likely to be green and one that makes its neighbors green. We build on this optimization view but maximize the expected evidence of the statistic that the detector actually computes, which counts each distinct (context, token) pair, or \emph{unit}, once. The optimal sampler then \emph{tilts} the model's distribution by a score that credits a candidate for its own greenness and for its \emph{foresight}, the greenness it induces at the positions that read it as context. Because a unit that already occurs in the text adds no evidence, this \emph{unit-aware} score gives it no credit, which removes a reward for repetition that the tilt of DLM-KGW contains.

Under this objective, the evidence that a commit can gain at a given distortion depends on two components that existing green-red DLM watermarks keep fixed (Figure~\ref{fig:teaser}). The first is the \emph{partition} of the vocabulary into bands, one of which the key makes green in each context. The tilt gains the most evidence when the likely candidates are spread over the bands (Proposition~\ref{prop:balanced}), but a keyed hash puts the two most likely candidates into the same band in a quarter of the commits, and whenever another band is green, the tilt then has little to favor. The second is the beliefs from which foresight is computed. Existing methods use the neighbors' beliefs before placement, which are the same for every candidate, although placing a candidate changes the most likely token of a masked neighbor in about three quarters of uncertain commits, or \emph{decision states}.

Besides this objective, we learn both components:
\begin{itemize}
\item \emph{Learned bands} (\S\ref{sec:list}): four vocabulary bands learned from decoding traces so that competing candidates fall into different bands, while a keyed hash of the context still picks the green band, so the null distribution stays exact.
\item A \emph{foresight predictor} (\S\ref{sec:foresight}): a network initialized from the generator's top layers and trained by distillation to predict the neighbors' beliefs after placing each candidate.
\end{itemize}

We evaluate both components on the 600 WaterBench prompts \citep{tu2024waterbench} with LLaDA-8B-Instruct, against the official implementations of prior DLM watermarks (\S\ref{sec:results}). Each component raises the mean $z$-score, also under four further keys, and their gains add up. At $\delta=2$, where detection is not saturated, learned bands and the predictor raise TPR@1\% by 15 and 6 points, and by 21 together. Against DLM-KGW, ours detects up to 13 points more responses at low strengths, averaged over five keys, without a significant change in log-perplexity. At high strengths, where both detect almost every response, it keeps the judge score of unwatermarked text, whereas DLM-KGW is rated 0.5--1.0 points lower, mostly because its tilt rewards repetition loops, which the unit-aware score does not. On a second DLM, Dream, ours also detects more responses.

\begin{figure*}[t]
\centering
\includegraphics[width=\textwidth]{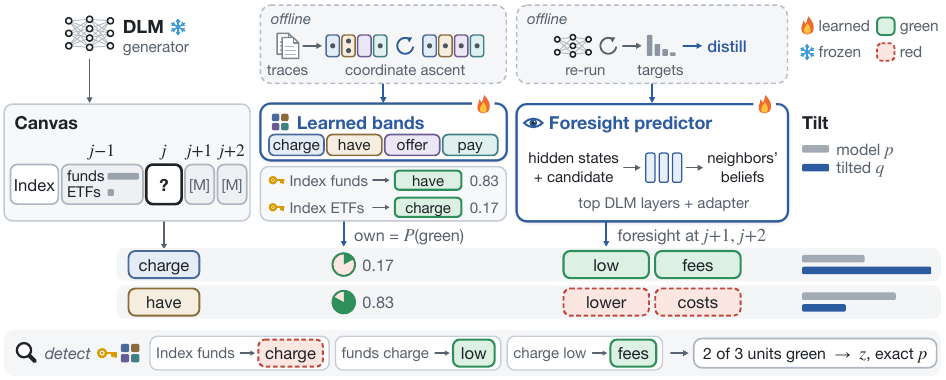}
\caption{Overview of our watermark at a masked position $j$ (illustrative tokens and probabilities). Each module is learned offline from its own data (top), the tilt combines the own and foresight terms during generation (middle), and the detector needs only the key and the bands (bottom).}
\label{fig:overview}
\end{figure*}

\section{Preliminaries}
\label{sec:background}
\paragraph{Masked diffusion language models.}
A masked DLM generates a response $x=(x_1,\dots,x_L)$ from a canvas of mask tokens \M{}. At each step, given the prompt and the committed tokens, it predicts a distribution over the vocabulary $\V$ for every masked position $i$, its \emph{belief} $\pi_i$, and \emph{commits} some positions by sampling their tokens, which then stay fixed. The decoder chooses which positions to commit, for instance the most confident ones \citep{nie2025llada}, so a position is often committed before some of its left neighbors and after some of its right ones.

\paragraph{Green-red watermarks.}
A green-red watermark \citep{kirchenbauer2023watermark} splits the vocabulary at each position $i$ into a \emph{green list} $\G(c_i)$ of relative size $\gamma$ and a red rest, seeded by a secret key $K$ and the context $c_i$, the preceding tokens. Generation adds a strength $\delta$ to the logits of green tokens, which moves probability toward them where the model is uncertain. The detector, which needs the key but not the model, counts the green tokens, and with $k$ green tokens among $n$, $z=(k-\gamma n)/\sqrt{\gamma(1-\gamma)n}$ is approximately standard normal without the watermark. Because a repeated (context, token) pair, or \emph{unit}, carries no new evidence, detectors count each distinct unit once \citep{fernandez2023three}.

\paragraph{Green-red watermarks for DLMs.}
DLM-KGW \citep{gloaguen2026watermarking} credits a candidate with its expected greenness under the beliefs about its masked context, plus the greenness it induces at the next position under that position's current belief. Maximizing the expected share of green tokens under a KL constraint then tilts the model's distribution exponentially toward candidates with high credit.

\section{Method}
\label{sec:method}

\subsection{Commits as evidence maximization}
\label{sec:formulation}
Figure~\ref{fig:overview} summarizes the method at one position. There the model prefers \emph{have}, but the tilt moves its distribution toward \emph{charge}, which is probably red itself but makes its predicted neighbors \emph{low fees} green. We take the KL-regularized view of DLM-KGW, but for the statistic that the detector computes, the number of distinct green units. We use the two preceding tokens as context, $c_j=(x_{j-2},x_{j-1})$ (Appendix~\ref{app:ablations}). Committing a candidate $v$ at position $j$ then affects the green count in two ways: the unit at $j$ may be green, and $v$ enters the contexts of the units at $j{+}1$ and $j{+}2$, its \emph{dependents}. Farther units depend on $v$ only through the beliefs, and scoring them adds little (Appendix~\ref{app:ablations}). We define the expected contribution of $v$, relative to the null rate $\gamma$, as
\begin{equation}
\label{eq:score}
s(v) = \underbrace{g_0(v) - \gamma}_{\text{own}} + \lambda\,\underbrace{\textstyle\sum_{t=1,2}\big(g_t(v)-\gamma\big)}_{\text{foresight }\Phi(v)},
\end{equation}
where $g_0(v)$ is the probability that $v$ is green given the beliefs about its context, and $g_t(v)$ is the probability that the dependent $j{+}t$ becomes green once $v$ is placed. The weight $\lambda\in[0,1]$ discounts the dependents, which are tilted again when committed. A unit that already occurs in the committed text adds no evidence, so the \emph{unit-aware} score sets its green probability to $\gamma$, at $j$ and at the dependents alike. The \emph{plain} score omits this correction and keeps rewarding repeated green units (\S\ref{sec:main}).

\begin{proposition}[KL-optimal tilt]
\label{prop:tilt}
Let $p$ be the model's distribution over the candidates at $j$, $s$ any score, and $\beta>0$. The distribution $q$ that maximizes $\mathbb{E}_{v\sim q}[s(v)]-\frac{1}{\beta}\KL(q\,\|\,p)$ is \mbox{$q^\star(v)\propto p(v)\,e^{\beta s(v)}$}.
\end{proposition}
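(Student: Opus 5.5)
The plan is the standard Gibbs variational argument: rewrite the objective as a single KL divergence to the tilted distribution, up to a constant that does not depend on $q$. The candidate set is finite (a subset of $\V$), so all sums are finite. We may restrict to $q$ with $q\ll p$, since otherwise $\KL(q\,\|\,p)=\infty$ and the objective is $-\infty$. Set
\[
Z=\sum_{v} p(v)\,e^{\beta s(v)},
\]
which is finite and positive because $s$ is real-valued on finitely many candidates and $p$ has some mass. Then define $q^\star(v)=p(v)e^{\beta s(v)}/Z$. Note that $q^\star$ has the same support as $p$.

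The key step is a direct expansion. For any $q\ll p$,
\[
\log\frac{q(v)}{q^\star(v)}=\log\frac{q(v)}{p(v)}-\beta s(v)+\log Z .
\]
Taking expectation under $q$ gives
\[
\KL(q\,\|\,q^\star)=\KL(q\,\|\,p)-\beta\,\mathbb{E}_{q}[s]+\log Z .
\]
Dividing by $\beta>0$ and rearranging yields
\[
\mathbb{E}_{v\sim q}[s(v)]-\tfrac{1}{\beta}\KL(q\,\|\,p)=\tfrac{1}{\beta}\log Z-\tfrac{1}{\beta}\KL(q\,\|\,q^\star).
\]
The first term on the right is independent of $q$.

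By Gibbs' inequality, $\KL(q\,\|\,q^\star)\ge 0$, with equality iff $q=q^\star$. Hence the objective is at most $\frac1\beta\log Z$, this value is attained exactly at $q^\star$, and the maximizer is unique. As a byproduct, the optimal value is $\frac1\beta\log Z$, which the paper may reuse later, for instance in Proposition~\ref{prop:balanced}.

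The only point needing care is support and convention bookkeeping. This means using $0\log 0=0$, checking that candidates with $p(v)=0$ get $q^\star(v)=0$, and ensuring that no division by zero occurs in the log-ratio on the support of $q$. I expect this to be the main (minor) obstacle. An alternative route is to maximize with a Lagrange multiplier for $\sum_v q(v)=1$, setting $s(v)-\frac1\beta(\log\frac{q(v)}{p(v)}+1)-\mu=0$. That derivation needs separate concavity and boundary arguments, which the KL identity above avoids, so I would present the identity.
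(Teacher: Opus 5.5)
Your proposal is correct and matches the paper's proof: both define $Z$, expand $\KL(q\,\|\,q^\star)=\KL(q\,\|\,p)-\beta\,\mathbb{E}_q[s]+\log Z$ so that the objective equals $\frac1\beta\big(\log Z-\KL(q\,\|\,q^\star)\big)$, and conclude by Gibbs' inequality. Your support bookkeeping ($q\ll p$, $0\log 0=0$) is a harmless refinement that the paper leaves implicit, and your side remark is off: Proposition~\ref{prop:balanced} does not reuse the optimal value $\frac1\beta\log Z$, since its proof rests on the concavity of $\rho_\beta$ and Jensen's inequality.
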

Proposition~\ref{prop:tilt} is the standard Gibbs variational principle (proofs in Appendix~\ref{app:proofs}). The tilt is thus optimal for $s$, an estimate of one commit's evidence from the model's beliefs, not for the final detection statistic. As in \citet[Theorem~3.1]{gloaguen2026watermarking}, the tilt is applied to the logits, and we report strengths $\delta$ in their convention (Appendix~\ref{app:decoding}). Under the unit-aware score, a repeated unit adds nothing to $s$ and earns no reward, whereas a new unit multiplies the candidate's weight by a factor that exceeds one on average over keys, so the tilt favors new units, as skipping the watermark at repeated contexts does \citep{hu2024unbiased,dathathri2024synthid}. With $s(v)=\ind{v\in\G(c_j)}$, the tilt is that of \citet{kirchenbauer2023watermark}, and that of DLM-KGW is the special case with a one-token context, one dependent, $\lambda=1$, current beliefs and the plain score. The evidence gained at a given strength then depends on the partition (\S\ref{sec:list}) and on the beliefs behind foresight (\S\ref{sec:foresight}).

\subsection{Learned bands}
\label{sec:list}
Learned bands decouple the partition, which we learn, from the key, which only selects the green band. A green list learned without the key would be the same for every key, so unwatermarked text could match it, and the null distribution would no longer be known (Appendix~\ref{app:keys}). A partition assigns every token $v$ to one of $B$ bands, $\mathrm{band}(v)$. A keyed hash of the context $c$ selects one band, $b^\star(c)=H_K(c) \bmod B$, and $\G(c)=\{v:\mathrm{band}(v)=b^\star(c)\}$. As DLM-KGW does, we use $B=4$ ($\gamma=0.25$). The partition and the key are secret, and the detector needs both but not the model. While the context is still masked, $g_0(v)$ is the expectation of $\ind{\mathrm{band}(v)=b^\star(c)}$ over the beliefs about the masked context positions (Figure~\ref{fig:overview}), whereas detection depends only on the final text.

\begin{proposition}[Exact null]
\label{prop:null}
Model $H_K$ as a random function, and let $x$ be any text that does not depend on $K$, with distinct contexts $c_1,\dots,c_m$, where $c_i$ is followed by $n_{i,b}$ distinct tokens of band $b$. For any partition, the number of green units in $x$ is a sum of $m$ independent variables, the $i$-th equal to $n_{i,b}$ for a band $b$ drawn uniformly at random.
\end{proposition}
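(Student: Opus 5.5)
We will condition on the text $x$ and use that $x$ does not depend on $K$. The first step is to write the statistic as a sum over contexts. The detector counts each distinct unit $(c,v)$ once, so the number of green units is
\[
S=\sum_{i=1}^m \sum_{v\in T_i}\ind{\mathrm{band}(v)=b^\star(c_i)},
\]
where $T_i$ is the set of distinct tokens that follow $c_i$ in $x$. Grouping the tokens of $T_i$ by band, the inner sum equals $n_{i,b^\star(c_i)}$. The partition is fixed and independent of $K$, so for fixed $x$ every $n_{i,b}$ is a deterministic number. The only randomness in $S$ then comes from the values $b^\star(c_1),\dots,b^\star(c_m)$.

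The second step is to identify the law of these values. We model $H_K$ as a uniformly random function from contexts to outputs, and take its range to be a multiple of $B$ (or idealize $H_K \bmod B$ directly as uniform on $\{0,\dots,B-1\}$). The values of a random function at distinct points are independent and uniform. Since $c_1,\dots,c_m$ are distinct by construction, $b^\star(c_i)=H_K(c_i)\bmod B$ are i.i.d.\ uniform over the $B$ bands. This independence is also why we index by distinct contexts: units that share a context share the same $b^\star$, and they are grouped into one summand rather than treated as independent.

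It follows that $S=\sum_i n_{i,b_i}$ with $b_i$ i.i.d.\ uniform, which is the claimed representation, and it holds for any partition. The point that needs care, and the main obstacle, is the conditioning. Everything above is an argument conditional on $x$. It is valid only because $x$, and hence the multiset of contexts and the counts $n_{i,b}$, is independent of $K$. This is the unwatermarked null hypothesis. We will state explicitly that the partition is fixed before $K$ is drawn. We will also note that if $x$ were produced with $K$, the $b_i$ would no longer be independent of the $n_{i,b}$. A related subtlety is the definition of contexts at the start of the response, where the two preceding tokens may come from the prompt; we take $c_i$ to be whatever tuples the detector hashes, and the argument is unchanged.

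Finally, we may remark on the consequence: the exact null distribution of $S$ is computable as the convolution of the $m$ discrete uniform laws. Its mean is $\sum_i \frac1B\sum_b n_{i,b}=\gamma n$ with $\gamma=1/B$, where $n$ is the number of distinct units.
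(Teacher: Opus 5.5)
Your proof is correct and follows essentially the same route as the paper's. Like the paper, you rewrite the distinct-unit green count as $\sum_i n_{i,b^\star(c_i)}$, note that the $n_{i,b}$ are fixed once $x$ and the partition are fixed independently of $K$, and use that a random function takes independent uniform values at the distinct contexts $c_i$; your added remarks on the range of $H_K$ being a multiple of $B$ and on conditioning on $x$ make explicit what the paper leaves implicit without changing the argument.
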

The partition still determines the gain of the tilt. If the model puts mass $M_b$ on band $b$ at a commit and $b$ is green, raising the green logits by $\beta$ raises this mass to $\rho_\beta(M_b)=M_be^{\beta}/(1-M_b+M_be^{\beta})$.

\begin{proposition}[Balanced bands]
\label{prop:balanced}
For any $\beta>0$, $\frac{1}{B}\sum_b [\rho_\beta(M_b)-M_b] \le \rho_\beta(\gamma)-\gamma$, with equality if and only if $M_b=\gamma$ for all $b$.
\end{proposition}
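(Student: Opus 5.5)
Since the $M_b$ are the model's masses on the $B$ bands at one commit, they are nonnegative and satisfy $\sum_b M_b=1$, so their average is $\frac{1}{B}\sum_b M_b=\frac{1}{B}=\gamma$. The plan is to apply Jensen's inequality to the gain function $f(m)=\rho_\beta(m)-m$ on $[0,1]$. If $f$ is strictly concave, then $\frac{1}{B}\sum_b f(M_b)\le f\big(\frac{1}{B}\sum_b M_b\big)=f(\gamma)=\rho_\beta(\gamma)-\gamma$. Strict concavity also gives equality exactly when all $M_b$ coincide, and hence exactly when each $M_b=\gamma$. The proof therefore reduces to showing that $\rho_\beta$ is strictly concave on $[0,1]$ for every $\beta>0$, since subtracting the linear term $m$ does not change concavity.

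To establish concavity, write $a=e^{\beta}>1$, so that $\rho_\beta(m)=am/(1+(a-1)m)$. This is a Möbius map. A direct computation gives
\begin{equation*}
\rho_\beta'(m)=\frac{a}{(1+(a-1)m)^2},
\end{equation*}
and differentiating once more gives
\begin{equation*}
\rho_\beta''(m)=-\frac{2a(a-1)}{(1+(a-1)m)^3}.
\end{equation*}
The denominator $1+(a-1)m$ is at least $1$ on $[0,1]$, so it is positive. Because $a>1$, the numerator $2a(a-1)$ is positive, and therefore $\rho_\beta''<0$ throughout the interval. This is strict concavity.

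Next I check the equality case. Strict Jensen states that equality holds if and only if the random variable is almost surely constant. Here the variable takes the value $M_b$ with probability $1/B$ for each $b$, so it is constant exactly when $M_1=\dots=M_B$. Together with $\sum_b M_b=1$, this forces $M_b=1/B=\gamma$ for every $b$. Conversely, if every $M_b=\gamma$, then both sides of the inequality equal $\rho_\beta(\gamma)-\gamma$.

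The only step that needs any care is the concavity computation and its sign, which depends on $\beta>0$. I would also state explicitly the assumption that the masses sum to one, because the statement only implies it. I would add that averaging over $b$ corresponds to the uniformly random green band from Proposition~\ref{prop:null}. This justifies reading the left-hand side as the expected gain over keys, which is the quantity that balanced bands maximize.
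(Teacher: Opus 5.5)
Your proof is correct and follows the paper's argument. Both use the second derivative $-2a(a-1)/(1+(a-1)M)^3<0$ with $a=e^{\beta}>1$, then Jensen's inequality at the mean $\gamma=1/B$, with equality from strict concavity; the only cosmetic difference is that you apply Jensen to $\rho_\beta(m)-m$ directly, whereas the paper applies it to $\rho_\beta$ and subtracts $\frac{1}{B}\sum_b M_b=\gamma$ afterwards.
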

The gain $\rho_\beta(M)-M$ is concave in $M$ and vanishes at $M=0$ and $M=1$. A band holding almost all of the probability, or almost none, gains little when it is green, so the tilt gains the most on average over keys when every band holds a likely candidate. We therefore learn the partition that maximizes the left-hand side over the commits of unwatermarked decoding traces of training prompts, by coordinate ascent over tokens (Figure~\ref{fig:overview}, top; Appendix~\ref{app:partition}). On held-out commits, the two most likely candidates share a band in 17\% of the commits, against $1/B=25\%$ for random bands.

\subsection{Foresight predictor}
\label{sec:foresight}
Foresight needs the beliefs about the dependents once $v$ is placed, $\pi(\cdot\mid v)$. The dependent $j{+}t$ is green if its token $w$ lies in the green list of its context, which contains $v$ and the token $u$ at one other position $o$: the context of $j{+}1$ is $(x_{j-1},v)$, so $o=j{-}1$, and that of $j{+}2$ is $(v,x_{j+1})$, so $o=j{+}1$:
\begin{equation}
\label{eq:gt}
\begin{split}
g_t(v) = \textstyle\sum_{u,w}\, & \pi_{o}(u\mid v)\,\pi_{j+t}(w\mid v)\\
& \times\ind{w\in\G(c_{j+t})},
\end{split}
\end{equation}
treating the two positions as independent, as the generator's beliefs do. The current beliefs $\pi(\cdot)$ are a cheap proxy, but they are the same for every candidate, so only the green lists change with $v$. Yet placing $v$ changes the beliefs themselves, and this change accounts for most of the variation of foresight across candidates (\S\ref{sec:analysis}). Computing $\pi(\cdot\mid v)$ exactly costs one generator pass per candidate.

The \emph{foresight predictor} estimates these beliefs instead. Like the draft heads of speculative decoding \citep{cai2024medusa,li2024eagle}, it reads the generator's hidden states. It takes the states that enter the generator's top layers in a window around $j$, an adapter rewrites the state at $j$ to encode $v$, and fine-tuned copies of the top layers process the window with and without the rewrite. Their difference, added to the generator's final states and read through its output layer, gives the post-placement beliefs, which equal the current beliefs before training, and all candidates of a commit share one batched pass. We distill the predictor from the generator's own post-placement predictions at uncertain commits (decision states) of training prompts (Figure~\ref{fig:overview}, top) and apply it at uncertain commits during decoding (Appendix~\ref{app:learned}).

\subsection{Decoding and detection}
\label{sec:decoding}
We tilt the most likely candidates of every masked position of the current block, with $\lambda=\frac12$ so that the two dependents together weigh as much as the position itself. As in prior work \citep{gloaguen2026watermarking}, every masked position draws a candidate with Gumbel noise, and the low-confidence order commits the position whose draw has the highest tilted probability. We run the predictor only for the position that wins, and its foresight can change that position's draw after the choice. We therefore recompute the winner with the same noise until the winning position already carries its predicted foresight, so that the committed token is almost always the draw that won the choice (\emph{consistent order}, Appendix~\ref{app:decoding}). Special tokens, which the detector ignores, are treated as red. The detector re-tokenizes the text, counts each distinct unit once, and computes the green count's $z$-score and exact $p$-value (Proposition~\ref{prop:null}).

\section{Experimental setup}
\label{sec:setup}
\paragraph{Data and models.} We watermark LLaDA-8B-Instruct \citep{nie2025llada} on the 600 prompts of three WaterBench tasks \citep{tu2024waterbench} (WaterBench-600): long-form QA from ELI5 \citep{fan2019eli5}, finance QA from FiQA \citep{maia2018fiqa} and instruction following from AlpacaFarm \citep{dubois2023alpacafarm}. We use the prompt templates of DLM-KGW and learn the bands and the predictor on disjoint prompts from the same sources (Appendix~\ref{app:protocol}). All methods generate up to 256 tokens in blocks of 32 at temperature 0.5. They use LLaDA's default low-confidence order, which gives a far lower log-perplexity than the random order of DLM-KGW's main experiments (1.30 against 1.95 without a watermark, Appendix~\ref{app:more}).

\paragraph{Baselines.} We compare with the two DLM watermarks whose official code runs in our setting, and we generate and detect their texts with that code. DLM-KGW \citep{gloaguen2026watermarking} runs in its main configuration ($\gamma=0.25$, $\delta\in\{2,3,4,5\}$, a hash of the previous token), for which its authors found no significant difference between context sets. dgMARK \citep{hong2026dgmark} watermarks the decoding order instead of the tokens, and we run it with greedy or top-3 sampling (beam size one).

\paragraph{Metrics.} We report TPR@1\% and TPR@0.1\%, the true-positive rates at false-positive rates (FPR) of 1\% and 0.1\%, with thresholds \emph{calibrated} on the 600 unwatermarked responses of the same model and order, re-detected under 100 keys and pooled for keyed detectors so that the FPR holds on average over keys. A \emph{nominal} TPR@$x$ uses the detector's own $p$-value threshold $x$ instead. Quality is measured by the mean log-perplexity (log-PPL) under Qwen3-8B \citep{yang2025qwen3} and by an LLM judge, Gemma-4-12B-it \citep{gemma2026gemma4}, with the rubric of \citet{gloaguen2026watermarking} (mean of style and consistency, 1--10). We also report the mean $z$-score $\bar z$ and the share of responses with a repetition \emph{loop} (Rep.), in which over 15\% of the word 4-grams repeat an earlier one. Each prompt is decoded once per configuration with the same seed, and a difference is significant when its paired 95\% bootstrap confidence interval (CI) excludes zero. Besides the same strength, we compare at equal mean log-perplexity and at equal KL divergence from the model's distribution (summed over a response's commits), interpolating the reference linearly between its strengths.

\section{Results}
\label{sec:results}
\subsection{Comparison with prior DLM watermarks}
\label{sec:main}
\begin{table}[t]
\centering
\small
\setlength{\tabcolsep}{2.6pt}
\begin{tabular}{@{}lcrrrrr@{}}
\toprule
 & & \multicolumn{2}{c}{TPR (\%) $\uparrow$} & \multicolumn{1}{c}{log-PPL} & \multicolumn{1}{c}{Judge} & \multicolumn{1}{c}{Rep.} \\
\cmidrule(lr){3-4}
Method & $\delta$ & @1\% & @0.1\% & $\downarrow$ & $\uparrow$ & $\downarrow$ \\
\midrule
No watermark & -- & -- & -- & 1.297 & 9.12 & 19.3 \\
dgMARK, greedy & -- & 92.7 & 84.2 & 1.550 & 9.07 & 16.2 \\
dgMARK, top-3 & -- & 92.3 & 84.3 & 1.557 & 9.12 & 15.2 \\
\midrule
DLM-KGW & 2 & 51.3 & 24.8 & \textbf{1.316} & 9.13 & 19.3 \\
\Method{} & 2 & \textbf{69.5} & \textbf{44.0} & 1.326 & \textbf{9.29} & \textbf{18.2} \\
\midrule
DLM-KGW & 3 & 85.7 & 70.0 & \textbf{1.387} & 8.97 & 23.3 \\
\Method{} & 3 & \textbf{89.3} & \textbf{79.3} & 1.398 & \textbf{9.30} & \textbf{14.0} \\
\midrule
DLM-KGW & 4 & 96.0 & 91.3 & 1.506 & 8.62 & 26.3 \\
\Method{} & 4 & \textbf{96.2} & \textbf{92.7} & \textbf{1.469} & \textbf{9.32} & \textbf{14.3} \\
\midrule
DLM-KGW & 5 & 97.7 & 95.3 & 1.602 & 8.15 & 30.5 \\
\Method{} & 5 & \textbf{98.3} & \textbf{96.0} & \textbf{1.575} & \textbf{9.26} & \textbf{12.2} \\
\bottomrule
\end{tabular}
\caption{Main results on WaterBench-600 with LLaDA-8B-Instruct in its low-confidence order. TPRs use calibrated thresholds, log-PPL is measured under Qwen3-8B, the judge is Gemma-4-12B-it (1--10), and Rep. is the share of responses with a repetition loop (\S\ref{sec:setup}). Bold marks the better of DLM-KGW and ours at the same $\delta$.}
\label{tab:main}
\end{table}

\begin{figure}[t]
\centering
\includegraphics[width=\columnwidth]{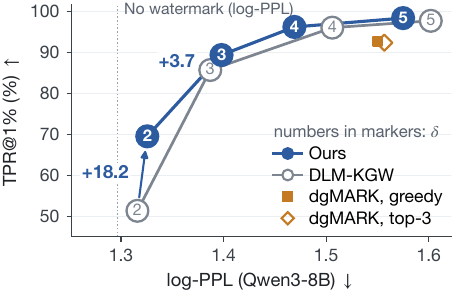}
\caption{TPR@1\% against log-PPL on WaterBench-600 with LLaDA-8B-Instruct in its low-confidence order. Arrows mark our gain at the same $\delta$.}
\label{fig:frontier}
\end{figure}
\IfFileExists{tables/table_fac.tex}{\begin{table*}[t]
\centering
\small
\setlength{\tabcolsep}{3.3pt}
\begin{tabular}{@{}llrrrrrrrrrrrr@{}}
\toprule
 & & \multicolumn{4}{c}{$\delta=2$} & \multicolumn{4}{c}{$\delta=3$} & \multicolumn{4}{c}{$\delta=4$} \\
\cmidrule(lr){3-6}\cmidrule(lr){7-10}\cmidrule(lr){11-14}
 & & $\bar z$ & TPR & log-PPL & Judge & $\bar z$ & TPR & log-PPL & Judge & $\bar z$ & TPR & log-PPL & Judge \\
Partition & Foresight & $\uparrow$ & @1\% $\uparrow$ & $\downarrow$ & $\uparrow$ & $\uparrow$ & @1\% $\uparrow$ & $\downarrow$ & $\uparrow$ & $\uparrow$ & @1\% $\uparrow$ & $\downarrow$ & $\uparrow$ \\
\midrule
 & Current beliefs & 2.35 & 49.0 & 1.337 & 9.18 & 3.78 & 84.5 & 1.397 & \textbf{9.35} & 5.27 & 94.5 & 1.507 & \textbf{9.40} \\
\multirow{-2}{*}{\begin{tabular}[c]{@{}l@{}}Keyed\\hash\end{tabular}} & Predictor & 2.61 & 59.5 & 1.351 & \textbf{9.29} & 3.98 & 86.5 & 1.403 & 9.31 & 5.45 & 95.7 & 1.489 & 9.39 \\
\midrule
 & None ($\lambda=0$) & 2.06 & 39.3 & 1.317 & 9.22 & 3.11 & 70.7 & 1.372 & 9.30 & 4.15 & 89.5 & 1.451 & 9.22 \\
 & Current beliefs & 2.77 & 63.7 & \textbf{1.308} & 9.24 & 4.25 & 88.7 & \textbf{1.391} & 9.33 & 5.69 & \textbf{96.3} & 1.482 & 9.25 \\
\rowcolor{black!7} & Predictor & \textbf{2.95} & \textbf{69.5} & 1.326 & \textbf{9.29} & \textbf{4.35} & \textbf{89.3} & 1.398 & 9.30 & \textbf{5.97} & 96.2 & \textbf{1.469} & 9.32 \\
\multirow{-4}{*}{\begin{tabular}[c]{@{}l@{}}Learned\\bands\end{tabular}} & Exact$^\dagger$ & 3.10 & 73.2 & 1.326 & -- & 4.66 & 93.3 & 1.378 & 9.27 & 6.11 & 96.8 & 1.469 & 9.31 \\
\bottomrule
\end{tabular}
\caption{Crossing the two learned components on WaterBench-600 with LLaDA-8B-Instruct (same prompts, seeds and unit-aware score in every row). The keyed hash assigns bands by a keyed hash of the context and the token, as standard green-red watermarks do, and exact foresight$^\dagger$, a reference, re-runs the generator for each candidate. Metrics as in Table~\ref{tab:main}. The shaded row is ours, and bold marks the best crossed cell at each $\delta$. Further metrics are in Table~\ref{tab:facfull} and paired 95\% CIs in Table~\ref{tab:contrasts}.}
\label{tab:fac}
\end{table*}
}{}

\paragraph{At low strengths, ours detects more at the same log-perplexity.} Low strengths cost the least quality but leave detection uncertain, so extra evidence matters most there. At $\delta=2$ and 3, ours detects more responses than DLM-KGW with no significant difference in log-perplexity (Tables~\ref{tab:main} and~\ref{tab:paired}, Figure~\ref{fig:frontier}). Averaged over five keys on 300 prompts, since one key shifts each detector's null distribution differently (Appendix~\ref{app:keys}), ours detects 12.7 and 4.5 points more responses at 1\% FPR and 11.9 and 10.3 points more at 0.1\% FPR, all significantly (Table~\ref{tab:mkey}). At equal log-perplexity, all eight differences favor ours, four significantly (Appendix~\ref{app:more}). Ours thus turns the same strength into more evidence, a gain that \S\ref{sec:components} traces to the two learned components. Both versions of dgMARK detect fewer responses than ours at a higher log-perplexity.

\paragraph{Where both detect almost every response, ours keeps the quality of text without a watermark.} At $\delta=4$ and 5, where both detect 96--98\% of the responses, the judge rates ours 9.32 and 9.26, at least as high as text without a watermark (9.12), against 8.62 and 8.15 for DLM-KGW, and a second judge agrees (Appendix~\ref{app:more}). Most of the gap is explained by repetition loops. The tilt of DLM-KGW keeps rewarding repeated green units, although both detectors count each unit once, so it spends strength on evidence that is never counted, and its share of looping responses grows from 19 to 31\% with $\delta$, while ours stays between 12 and 18\%. Removing our unit-aware score confirms this cause, as ours then loops in 21\% of its responses instead of 14\% at $\delta=3$ and 4 and is rated 0.35 and 0.44 lower (Appendix~\ref{app:contrasts}). On the prompts where neither response loops, the judge rates the two watermarks alike at $\delta\le4$ (Table~\ref{tab:loopfree}). Rewarding only the evidence that the detector counts thus keeps quality at the strengths that make detection reliable.

\subsection{The two learned components}
\label{sec:components}
Table~\ref{tab:fac} crosses the partition (a keyed hash of the context and the token, as in standard watermarks, or learned bands) with the beliefs behind foresight (current or predicted) under the same objective. With neither learned component, our objective detects about as many responses as DLM-KGW at the same log-perplexity and is already rated 9.35 and 9.40 at $\delta=3$ and 4. The quality advantage over DLM-KGW therefore comes from the objective, whereas the learned components add detection, also without the unit-aware score (Appendix~\ref{app:contrasts}).

\paragraph{Learned bands raise the mean $z$-score and keep the null exact.} The tilt can prefer a green candidate only if the competing candidates fall into different bands, and learned bands put the two most likely candidates into the same band in 17\% of held-out commits, against a quarter for a keyed hash (\S\ref{sec:list}). Over the cell with neither component, they raise the mean $z$-score by 0.41--0.47 at every strength without raising log-perplexity, and at $\delta=2$, where detection is not saturated, they detect 14.7 points more responses at 1\% FPR. The judge score drops only at $\delta=4$, by 0.15, which the second judge does not confirm (Appendix~\ref{app:contrasts}). The gain comes from learning, not from fixing each token's band, since random bands raise the mean $z$-score by only 0.08 at $\delta=3$ (not significant). As the key still picks the green band, learning costs no false positives, and on unwatermarked text learned bands flag 0.7--1.8\% of texts at a nominal 1\% level, none significantly above 1\% (Appendix~\ref{app:keys}).

\paragraph{The foresight predictor raises the mean $z$-score with either partition.} Foresight rewards a candidate for making its neighbors green, so it is only as good as its beliefs about the neighbors after the placement. The predictor estimates these beliefs instead of reusing the current ones (\S\ref{sec:analysis}) and raises the mean $z$-score by 0.10--0.28 at the same strength with either partition, significantly in five of the six cells. At equal KL, the quantity the tilt trades against (Proposition~\ref{prop:tilt}), its gain is significant at every strength (Figure~\ref{fig:foresight}b), and with learned bands it needs 14 and 8\% less KL for the same mean $z$-score at $\delta=2$ and 3. At $\delta=2$, it detects 5.8 points more responses at 1\% FPR, significantly, whereas at $\delta=3$ and 4, where detection is nearly saturated, its TPR@1\% changes by less than a point.

\paragraph{Their gains add up.} The two components improve different inputs of the score, the partition which candidates can be green and the predictor how well foresight sees the neighbors, and their gains in mean $z$-score are close to additive, with no significant interaction (Appendix~\ref{app:contrasts}). Together, they detect 20.5 and 4.8 points more responses at 1\% FPR at $\delta=2$ and 3, with no significant change in log-perplexity or judge score, and need 30 and 22\% less KL for the same mean $z$-score (Figure~\ref{fig:components}). Both gains hold over four further keys (Appendix~\ref{app:contrasts}). Detection never runs the predictor, so these gains leave it as cheap as with a keyed hash.

\subsection{Why post-placement beliefs matter}
\label{sec:analysis}
\begin{figure}[t]
\centering
\includegraphics[width=\columnwidth]{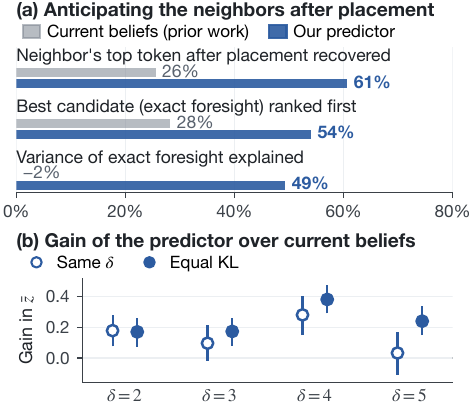}
\caption{(a) Fidelity to exact foresight on 2{,}397 held-out decision states (Table~\ref{tab:fidelity}). (b) Gain of the predictor in $\bar z$ over current-belief foresight with learned bands on WaterBench-600, at the same $\delta$ and at equal KL, with 95\% CIs.}
\label{fig:foresight}
\end{figure}
On 2{,}397 held-out decision states (Appendix~\ref{app:learned}), placing a candidate changes the most likely token of a masked neighbor in 74\% of the cases (examples in Table~\ref{tab:examples}). Current-belief foresight cannot see this change, so it is only weakly correlated with exact foresight (correlation 0.39) and explains none of its variance (Figure~\ref{fig:foresight}a). The predictor recovers each neighbor's post-placement top-1 token in 61\% of the cases, explains 49\% of the variance, and ranks the exact best candidate first in 54\% of the states, against 28\% for current beliefs.

The predictor's gain in detection is smaller than its gain in fidelity because foresight has limited leverage on the tilt. It enters the score with weight $\frac12$, the predictor is used at a third of the tilts, and the dependents it favors are tilted again when committed. Even exact foresight, which re-runs the generator for each of the top-8 candidates, reaches a mean $z$-score of only 4.66 and 6.11 at $\delta=3$ and 4 at about the same KL. The predictor closes a quarter and two thirds of this gap, at 1.49 times the decoding time of current beliefs against at least 2.15 times for exact foresight (Appendix~\ref{app:decoding}).

\subsection{Other tasks, another model, robustness}
\label{sec:dream}
\IfFileExists{tables/table_transfer.tex}{\begin{table*}[t]
\small
\begin{minipage}[t]{0.52\textwidth}
\centering
\setlength{\tabcolsep}{4.2pt}
\begin{adjustbox}{max width=\linewidth}
\begin{tabular}{@{}lcccc@{}}
\toprule
 & \multicolumn{2}{c}{In distribution} & \multicolumn{2}{c}{Out of distribution} \\
\cmidrule(lr){2-3}\cmidrule(l){4-5}
Method & MBPP & MMLU-Pro & HumanEval & GPQA-D \\
\midrule
\multicolumn{5}{@{}l}{\emph{TPR@1\% (\%)} $\uparrow$} \\
DLM-KGW & 15.0 & 60.0 & 17.7 & 72.2 \\
Ours & \textbf{22.5} & \textbf{65.0} & 17.7 & 72.2 \\
\midrule
\multicolumn{5}{@{}l}{\emph{Pass@1 or accuracy (\%)} $\uparrow$} \\
No watermark & 37.0 & 40.0 & 35.4 & 20.7 \\
DLM-KGW & 32.5 & 35.0 & \textbf{34.8} & 26.8 \\
Ours & \textbf{36.5} & \textbf{35.7} & 33.5 & \textbf{29.3} \\
\bottomrule
\end{tabular}
\end{adjustbox}
\caption{Code and QA with LLaDA-8B-Instruct at $\delta=4$, with both components learned on training problems from MBPP, MMLU-Pro and GSM8K, none from HumanEval or GPQA-Diamond (GPQA-D, chance 25\%). Bold: the better watermark.}
\label{tab:transfer_tasks}
\end{minipage}
\hfill
\begin{minipage}[t]{0.46\textwidth}
\centering
\setlength{\tabcolsep}{3.2pt}
\begin{adjustbox}{max width=\linewidth}
\begin{tabular}{@{}lcrrr@{}}
\toprule
 & & TPR@1\% (\%) & Valid (\%) & log-PPL \\
Method & $\delta$ & $\uparrow$ & $\uparrow$ & $\downarrow$ \\
\midrule
No watermark & -- & -- & 62.8 & 2.188 \\
\midrule
DLM-KGW & 3 & 42.0 & 53.8 & 2.505 \\
Ours, transferred & 3 & 57.3 & 78.2 & 2.290 \\
Ours, learned & 3 & \textbf{60.0} & \textbf{81.3} & \textbf{2.278} \\
\midrule
DLM-KGW & 4 & 56.5 & 54.3 & 2.631 \\
Ours, transferred & 4 & 75.5 & 80.3 & 2.353 \\
Ours, learned & 4 & \textbf{78.3} & \textbf{84.2} & \textbf{2.316} \\
\bottomrule
\end{tabular}
\end{adjustbox}
\caption{Dream-v0-Instruct-7B on WaterBench-600 (official random order), nominal TPR@1\%. Transferred: LLaDA's partition with current beliefs. Learned: both components learned for Dream. log-PPL over valid responses.}
\label{tab:transfer_dream}
\end{minipage}
\end{table*}
}{}
\paragraph{Other tasks.} To test whether the components can be learned for another domain, we learn both on code and QA training prompts (Appendix~\ref{app:codeqa}) and evaluate them at $\delta=4$ on held-out MBPP and MMLU-Pro problems \citep{austin2021mbpp,wang2024mmlupro} and on HumanEval and GPQA-Diamond \citep{chen2021humaneval,rein2024gpqa}, which have no training prompts. On the tasks they were learned for, ours detects 7.5 points more programs than DLM-KGW (significantly) and 5.0 points more answers, and on the unseen tasks as many (Table~\ref{tab:transfer_tasks}), while its pass@1 and accuracy do not fall significantly below those without a watermark. Both detect fewer programs than answers, as programs leave few tokens uncertain. Three other writing tasks test instead whether components learned on WaterBench-style prompts carry over unchanged. On their 900 prompts, ours detects 6.0 points more than DLM-KGW at $\delta=2$ and as many at $\delta=3$, and at $\delta=4$, where both detect over 90\%, it detects 3.7 fewer but loops in 10\% of its responses instead of 26\% (Appendix~\ref{app:ood}).

\paragraph{Dream.} On Dream-v0-Instruct-7B (official random order, plain score), we test whether a learned partition transfers across models. LLaDA's partition, transferred through identical tokens with current-belief foresight, already detects 57.3 and 75.5\% of the 600 responses at $\delta=3$ and 4 (nominal TPR@1\%), against 42.0 and 56.5\% for DLM-KGW, and both components learned for Dream detect 60.0 and 78.3\% (Table~\ref{tab:transfer_dream}). Most of the gap to DLM-KGW comes from response length rather than from the signal, since 44\% of its responses at $\delta=4$ are too short to carry a signal, against 13\% of ours, which tilts the end-of-text token as red. Learning for Dream helps through the predictor, which raises the mean $z$-score over current-belief foresight by 0.22 and 0.15 (both significant), as on LLaDA.

\paragraph{Robustness and decoding order.} Under 20\% random deletions or insertions, ours detects about as many responses as DLM-KGW in the same order, and under every token edit its AUC stays above 96\%. Under 20\% substitutions it detects 6.7 points fewer, the price of two-token contexts, which trade robustness for detection without edits (Appendix~\ref{app:ablations}). After paraphrasing, both token-level watermarks keep an AUC of 67--75\%, against 52--58\% for dgMARK (Appendix~\ref{app:robust}). In the random order, on which its predictor is not trained, ours still detects about as many responses at a lower log-perplexity (Appendix~\ref{app:more}).

\section{Related work}
\label{sec:related}
\paragraph{Watermarking diffusion LMs.} One line keeps token-level green lists and handles contexts that are not yet decoded. Besides DLM-KGW (\S\ref{sec:background}), methods use the most likely tokens of masked neighbors \citep{dmark2025}, only the already revealed neighbors \citep{raban2026lrdwm}, or watermark a position when an adjacent token is unmasked \citep{yang2026umr}. Another line uses other channels, such as the decoding order \citep{hong2026dgmark}, position-keyed sampling noise \citep{bagchi2025}, key patterns for order-agnostic models \citep{chen2025patternmark}, or per-token signal vectors that carry several bits \citep{huang2026cdmark}. Methods of the first line take the partition from a keyed hash and, when they look ahead, use the neighbors' beliefs before placement. Ours belongs to this line but learns both, a partition that separates competing candidates and the neighbors' beliefs after placement.

\paragraph{Designing green lists.} Green-red watermarks bias the distribution, unlike distortion-free schemes that couple the sampling randomness to the key \citep{christ2024undetectable,kuditipudi2024robust,dathathri2024synthid}, and several change which tokens share a list to lose less quality. WatME \citep{chen2024watme} and \citet{guo2024semantic} split near-synonyms between green and red, and Balance-Marking \citep{wang2024codable} and WaterMod \citep{park2026watermod} balance the probability of the lists with a proxy model or by probability rank. Token-specific watermarking \citep{huo2024tsw} and SIR \citep{liu2024sir} learn networks that set each token's strength or bias, so their detectors need a language model or the learned network. Unigram-Watermark \citep{zhao2024unigram} fixes one random green list for every position, and a distribution-preserving watermark chooses among random vocabulary segments \citep{chen2025mcmark}. Unlike all of these, learned bands fit the partition to the candidates that compete during decoding, while a keyed hash of each context still picks the green band, so the green list changes with the context and detection needs only the key and the partition.

\section{Conclusion}
\label{sec:conclusion}
Viewing each commit of a DLM as maximizing the evidence its detector expects shows that a green-red watermark depends on two components that prior work keeps fixed: the partition of the vocabulary and the beliefs behind foresight. Learning a partition that separates competing candidates and a predictor of the neighbors' post-placement beliefs each raises detection, with additive gains, while counting each unit once, as the detector does, keeps the quality of unwatermarked text. Because the key still chooses the green band and detection never runs the predictor, a provider gains this detection and quality while keeping what makes green-red detection practical, an exact null distribution and a detector that needs only the key and the bands.

\section*{Limitations}
\textbf{Security.} The bands are learned, but the key chooses which band is green in each context, so the null distribution stays exact for any partition (Proposition~\ref{prop:null}). Since every green list is one of four fixed sets, the partition should stay secret together with the key, and attacks that target the partition are left for future work. \textbf{Robustness.} Like other green-red watermarks, ours loses most of its signal under paraphrasing. Under random deletions and insertions it detects about as many responses as DLM-KGW, and the width of its contexts trades detection without edits against robustness to token substitutions (Appendices~\ref{app:ablations} and~\ref{app:robust}). \textbf{Time and cost.} The predictor is trained once for each generator, which takes about 29 GPU-hours including its training data (Appendix~\ref{app:resources}), and it makes generation about 1.5 times slower than foresight from the current beliefs. Exact foresight, which re-runs the generator for each candidate, is at least 2.15 times slower. The predictor lies between the two and closes a quarter to two thirds of the gap between them in mean $z$-score (\S\ref{sec:analysis}). \textbf{Evaluation.} Our judges are open models rather than human raters. To limit the bias of a single judge, two judges of different families rate all methods with the same instructions, and their scores correlate at 0.84 over responses and 0.99 over methods (Appendix~\ref{app:more}).

\section*{Ethical considerations}
Watermarking supports the attribution of machine-generated text and can deter misuse, but it can also be used to track users, and false positives can harm people whose writing is flagged. We report false-positive rates on human-written text and do not recommend acting on a single detection without other evidence.

\bibliography{references}

\appendix
\renewcommand{\topfraction}{0.9}\renewcommand{\dbltopfraction}{0.9}\renewcommand{\textfraction}{0.07}\renewcommand{\floatpagefraction}{0.75}\renewcommand{\dblfloatpagefraction}{0.75}\setcounter{topnumber}{4}\setcounter{dbltopnumber}{3}\setcounter{totalnumber}{6}
\noindent Appendix~\ref{app:protocol} gives the experimental protocol, and the other appendices follow the order of the paper. Appendix~\ref{app:proofs} proves the propositions. Appendices~\ref{app:impl} and~\ref{app:learned} detail the partition, the decoding and the foresight predictor, with examples of post-placement beliefs. Appendix~\ref{app:keys} covers the null distribution and the calibration over keys, and Appendices~\ref{app:more}--\ref{app:ablations} give further results and ablations on WaterBench. Appendices~\ref{app:codeqa}--\ref{app:robust} cover code and question answering, Dream, other writing tasks and robustness, Appendix~\ref{app:resources} lists the resources we used, and Appendix~\ref{app:prompts} gives the judges' prompts.

\section{Experimental protocol}
\label{app:protocol}
\paragraph{Prompts.} Table~\ref{tab:splits} lists every prompt set. All are in English. The partition and the predictor are learned on training prompts from the same three sources as WaterBench, disjoint from the 600 evaluation prompts and filtered for near-duplicates of them. Appendix~\ref{app:prompts} gives the judges' instructions.

\begin{table}[h]
\centering
\small
\setlength{\tabcolsep}{3pt}
\begin{tabular}{@{}llr@{}}
\toprule
Purpose & Source & Prompts \\
\midrule
\multicolumn{3}{@{}l}{\emph{Training}} \\
Partition & ELI5, FiQA, AlpacaFarm & 1{,}200 \\
Predictor & ELI5, FiQA, AlpacaFarm & 1{,}612 \\
Code/QA (both) & MBPP, MMLU-Pro, GSM8K & 1{,}791 \\
\midrule
\multicolumn{3}{@{}l}{\emph{Development}} \\
Context width & ELI5, FiQA, AlpacaFarm & 150 \\
Foresight weight & ELI5, FiQA, AlpacaFarm & 150 \\
\midrule
\multicolumn{3}{@{}l}{\emph{Evaluation}} \\
Main results & WaterBench & 600 \\
Robustness & WaterBench (subset) & 90 \\
Code & HumanEval, MBPP & 164, 200 \\
QA & GPQA-D, MMLU-Pro & 198, 140 \\
\bottomrule
\end{tabular}
\caption{Prompt sets. All training prompts are disjoint from the evaluation prompts. Those of the partition and the predictor come from the training splits of the WaterBench sources, and the development prompts from their development splits. Each predictor prompt is decoded once, at one of four strengths ($\delta'\in\{1,1.5,2,2.5\}$, about 400 prompts each). Code/QA (both): the partition and the predictor learned for code and QA (Table~\ref{tab:transfer_tasks}, Appendix~\ref{app:codeqa}). QA: question answering. GPQA-D: GPQA-Diamond. Robustness: three seeds per prompt.}
\label{tab:splits}
\end{table}

\paragraph{Decoding.} With LLaDA, every method generates up to 256 tokens in blocks of 32, one token per step, at temperature 0.5 (Dream: Appendix~\ref{app:codeqa}). Candidates are drawn with Gumbel noise from a stream determined by the seed and the step, so all configurations use the same noise for the same prompt. Batched bf16 computation on GPUs still makes repeated runs of one configuration differ slightly.

\paragraph{Metrics.} Thresholds at 1\% and 0.1\% FPR are the empirical quantiles of the scores of the 600 unwatermarked responses of the same model and order. For keyed detectors the responses are re-detected with 100 keys and pooled (Appendix~\ref{app:keys}). Log-perplexity is computed under Qwen3-8B given the prompt. The judge, Gemma-4-12B-it, receives the system prompt of \citet{gloaguen2026watermarking} and the user message ``[Question] \emph{prompt} [Answer] \emph{response} [End Answer]'', and scores style, consistency, accuracy and ethics from 1 to 10. We report the mean of style and consistency, as they do. The judge decodes greedily, and a reply without the style, consistency and accuracy scores is regenerated up to twice at temperature 0.7. A response has a repetition \emph{loop} when the share of its word 4-grams that repeat an earlier one exceeds 0.15. As sanity checks, the judge rates responses with a loop about 7 and the others about 9.6, and its score for DLM-KGW falls monotonically with $\delta$ (Table~\ref{tab:main}). KL is summed over a response's commits and does not count how the tilt changes which position is committed next. Log-perplexity, measured on the final text, includes this effect.

\paragraph{Statistics.} Differences between configurations are paired by prompt (and seed). The 95\% CIs are percentile intervals of 2{,}000--4{,}000 bootstrap resamples of the prompts. Comparisons at equal log-perplexity or equal KL interpolate the reference configuration linearly between its strengths within each bootstrap resample. Resamples in which the compared value falls outside the reference's range are dropped.

\FloatBarrier
\section{Proofs}
\label{app:proofs}
\begin{proof}[Proof of Proposition~\ref{prop:tilt}]
Let $Z=\sum_v p(v)\,e^{\beta s(v)}$, so that $q^\star(v)=p(v)\,e^{\beta s(v)}/Z$. For any distribution $q$ over the candidates,
\[
\begin{aligned}\KL(q\,\|\,q^\star)&=\sum_v q(v)\log\frac{q(v)\,Z}{p(v)\,e^{\beta s(v)}}\\&=\KL(q\,\|\,p)-\beta\,\mathbb{E}_{v\sim q}[s(v)]+\log Z,\end{aligned}
\]
so the objective equals $\frac{1}{\beta}\big(\log Z-\KL(q\,\|\,q^\star)\big)$. Since $\KL(q\,\|\,q^\star)\ge 0$ with equality only at $q=q^\star$, the objective is maximized by $q^\star$ and only by it (the Gibbs variational principle).
\end{proof}
Proposition~\ref{prop:tilt} concerns one commit, with the base distribution and the score fixed. Our sampler applies it at every commit, so it optimizes the objective commit by commit rather than over the whole decoding, and the KL we report sums the divergences of these commits.

\begin{proof}[Proof of Proposition~\ref{prop:null}]
A unit $(c_i,v)$ is green if and only if $\mathrm{band}(v)=b^\star(c_i)$, so the number of green units that follow $c_i$ is $n_{i,b^\star(c_i)}$, and the number of green units in $x$ is $\sum_i n_{i,b^\star(c_i)}$. The counts $n_{i,b}$ depend only on $x$ and the partition, which do not depend on $K$. Because $H_K$ is a random function and the contexts $c_i$ are distinct, the bands $b^\star(c_i)=H_K(c_i)\bmod B$ are independent and uniform over the $B$ bands. Hence the terms $n_{i,b^\star(c_i)}$ are independent, and each equals $n_{i,b}$ for a uniformly random band $b$.
\end{proof}

\begin{proof}[Proof of Proposition~\ref{prop:balanced}]
With $a=e^{\beta}>1$, $\rho_\beta(M)=aM/(1+(a-1)M)$ has second derivative $-2a(a-1)/(1+(a-1)M)^3<0$, so it is strictly concave on $[0,1]$. The masses $M_b$ sum to one, so they average to $1/B=\gamma$. By Jensen's inequality, $\frac{1}{B}\sum_b\rho_\beta(M_b)\le\rho_\beta(\gamma)$, with equality if and only if all $M_b$ are equal, that is, $M_b=\gamma$ for all $b$. Subtracting $\frac{1}{B}\sum_b M_b=\gamma$ from both sides gives the claim.
\end{proof}

\FloatBarrier
\section{Method details}
\label{app:impl}
\subsection{Learning the partition}
\label{app:partition}
\paragraph{Data.} We decode the 1{,}200 training prompts (400 per task) once with LLaDA-8B-Instruct without a watermark in the random order, and 210 of them also in the low-confidence order, with the decoding settings of Appendix~\ref{app:protocol}. Each commit before the end-of-text token gives a record with the top-50 candidates and their probabilities at temperature 0.5. We hold out 5\% of the random-order prompts (about 10{,}000 of 253{,}000 records). Most traces come from the random order, but the candidates of a commit differ little between the two orders, and the learned bands raise detection in the low-confidence order (Table~\ref{tab:fac}).

\paragraph{Objective and algorithm.} The objective is $\frac1B\sum_b \rho_\beta(M_b)$ summed over the training records whose most likely candidate has probability below 0.98, with $M_b$ the share of the probability of the record's top-50 candidates, renormalized over these 50, that falls in band $b$ (so the $M_b$ sum to one, as in Proposition~\ref{prop:balanced}), and $\beta=4$. The search starts from a random partition, which assigns each token to a band by a fixed hash of its id. Each pass visits every candidate token once, in decreasing total probability, and moves it to the band that most increases the objective over the records in which it is a candidate. Passes repeat until no token moves (a few minutes on one CPU core). Tokens that never appear as candidates keep their initial band. The partition is a table with one band per token. The key enters only through $b^\star(c)$, a keyed integer hash of the two context token ids reduced modulo $B$. The hash applies rounds of multiplication and xor-shift, so that every bit of the result depends on the key and on both ids.

\paragraph{Held-out gain and alternatives.} On held-out records, the average gain $\frac1B\sum_b[\rho_4(M_b)-M_b]$ is 0.432 for the learned partition, 0.413 for a band network (an MLP over the context and token embeddings, trained toward balanced partitions) and 0.402 for a keyed hash, with the same ranking at $\beta=2$. The network separates competing candidates less, because competing candidates have similar embeddings. Under the network, the two most likely candidates share a band in 21\% of the commits, against 17\% under the learned partition. A soft relaxation of the network's band assignment and per-context refinements of the learned partition did not raise the held-out gain.

\paragraph{Balance across bands.} Learned bands do not gather the likely tokens in some bands. On the same held-out records, each band holds 24--26\% of the model's probability mass and of the most likely candidates, and on the commits with $p_{\max}<0.98$ the expected gain when a band is green is 0.43--0.44 for every band. One band holds more than 90\% of a commit's mass in 27\% of these commits, against 30\% for a keyed hash. A fixed hash of the token id, by contrast, puts 36\% of the mass into one band, because a few frequent tokens carry much of it.

\paragraph{Secrecy of the partition.} Many partitions are about equally good, and which one the search reaches depends mostly on where it starts. Starting from two other random partitions, the same procedure reaches nearly the same held-out gain (0.429 and 0.430, against 0.432), but the resulting bands agree with ours little more than a random partition does. After the best relabeling of the bands, 34 and 39\% of the candidates' probability mass on held-out records falls into the same band as ours, against 34\% for a random partition (weighted adjusted Rand index 0.03 and 0.05). A provider can therefore keep its watermark private by keeping the key, the initialization of the search and the learned partition secret: an attacker who re-runs our procedure on the open generator without them obtains a different partition.

\subsection{Decoding}
\label{app:decoding}
\paragraph{Strength.} As \citet{gloaguen2026watermarking} do, we add $\delta' s(v)$ to the logits before the temperature $T$ is applied, so $\beta=\delta'/T$ in Proposition~\ref{prop:tilt}. Their implementation scales its two terms by $\delta/2$, so we set $\delta'=\delta/2$ and report $\delta$ in their convention.

\paragraph{Commit order.} In unwatermarked decoding of the training prompts, LLaDA-8B-Instruct in its low-confidence order commits 39.2\% of its tokens while position $j{-}1$ or $j{-}2$ is still masked, and 36.7\% after position $j{+}1$ or $j{+}2$ has been committed (12.5\% both). Dream-v0-Instruct-7B in its random order does so for 65.9\% and 66.3\% of its tokens (35.9\% both). An autoregressive model does neither.

\paragraph{Foresight from the current beliefs.} Without placing $v$, the probability that the dependent $j{+}t$ becomes green is approximated from the current beliefs,
\begin{equation}
\label{eq:naive}
g_t(v)\approx\textstyle\sum_{u} \pi_{o}(u)\sum_{w} \pi_{j+t}(w)\,\ind{w\in\G(c_{j+t})},
\end{equation}
with $c_{j+t}$, $u$ and $w$ as in Eq.~\eqref{eq:gt}. Only the green list depends on $v$. With the predictor, $\pi_{j+t}$ and $\pi_o$ are replaced by the predicted beliefs with $v$ in place.

\paragraph{Unit-aware score.} A unit $(c,w)$ counts as present when a committed position of the response holds the same context and token (masked positions are never compared, and the first two positions use the padding context, as the detector does). Its green probability is then replaced by $\gamma$. This applies to the unit at $j$ and, at the dependents $j{+}1$ and $j{+}2$, to the unit of each of the top-8 tokens in the belief about the dependent, whether foresight uses the current or the predicted beliefs.

\paragraph{Special tokens.} The detector never counts special tokens (end of text, padding and the other reserved tokens), so they can never be green. We therefore tilt them as red tokens, with the boost $-\gamma\delta'$, and in beliefs they contribute the null rate $\gamma$. Crediting them as ordinary tokens instead makes foresight favor ending, because positions after an end-of-text token are predictable and, when the key makes them green, look valuable although they are never counted. This shortened the responses by 17--35 tokens on average in a 46-prompt screen. Leaving their probability untilted avoids this and keeps the share of very short answers at its unwatermarked level, and in an earlier configuration, tilting them as red lowered the mean log-perplexity further, by about 0.04 at $\delta=3$ and 4, without lowering the mean $z$-score.

\paragraph{Consistent order.} The predicted tilt of a position is computed only when that position wins the low-confidence choice. Each round computes it for the current winner, redraws every position's candidate with the same Gumbel noise, and recomputes the winner. The rounds stop when the winner already carries its predicted tilt, after at most four rounds per step (the cap was reached in fewer than one commit in 10{,}000). The committed token then differs from the winning draw in fewer than 0.01\% of the commits, against 2.5\% when positions are chosen with the current-belief tilt (the \emph{current-tilt order}).

\paragraph{Hyperparameters and cost.} Table~\ref{tab:hparams} lists the hyperparameters and where each comes from. On average, a step computes 2.1 position tilts and runs the predictor 0.72 times. Generation with our watermark takes 32.7\,s per 256-token response, against 21.9\,s with foresight from the current beliefs and the plain score (1.49 times, one A5000 GPU). Exact foresight, which re-runs the generator for all eight candidates, takes 47.1\,s (2.15 times) even when it is applied only at decision states (17\% of the commits). The exact-foresight runs of Appendix~\ref{app:contrasts} re-run the generator wherever the predictor is used, a superset of these commits. The predictor adds 3.5\,GB of bf16 weights.

\begin{table}[h]
\centering
\small
\setlength{\tabcolsep}{3pt}
\begin{tabular}{@{}lll@{}}
\toprule
Hyperparameter & Value & Choice \\
\midrule
Bands $B$ ($\gamma=1/B$) & 4 & prior work \\
Context width & 2 tokens & screened \\
Candidates tilted per position & top-50 & prior work \\
Beliefs per context position & top-4 & cost \\
Beliefs per dependent & top-8 & cost \\
Candidates with foresight & top-8 & cost \\
Foresight weight $\lambda$ & $\frac12$ & fixed \\
Predictor gate & 0.02 & cost \\
Rounds of the consistent order & 4 & fixed \\
Partition strength $\beta$ & 4 & $\delta'/T$ at $\delta=4$ \\
Temperature $T$ & 0.5 & prior work \\
\bottomrule
\end{tabular}
\caption{Hyperparameters of our watermark. Prior work: the value of \citet{gloaguen2026watermarking}. Screened: chosen among a few values (Appendix~\ref{app:ablations}). Cost: set to bound the cost. Fixed: fixed in advance.}
\label{tab:hparams}
\end{table}

\subsection{Detection and baselines}
\paragraph{Detection.} The text is re-tokenized with the LLaDA tokenizer. Position $i$ uses the context $(x_{i-2},x_{i-1})$, with a padding token for the first two positions, and each distinct (context, token) unit is counted once. The null distribution of the green count is the convolution, over distinct contexts, of the uniform distribution on the context's band counts (Proposition~\ref{prop:null}). Its upper tail at the observed count is the exact $p$-value, and $z=(k-\mathbb{E}k)/\mathrm{sd}(k)$ under the same distribution. Counting only the first occurrence of each context would also give an exact binomial test but would discard over a tenth of the units.

\paragraph{Baselines.} We run DLM-KGW \citep{gloaguen2026watermarking} with the official code in its WaterBench configuration, with $\gamma=0.25$, a one-token context with the sum hash, top-50 boosting and their predictive term. Their own detector scores the generated tokens. We run dgMARK with the official code and without a private key in two versions, sampling from the top-3 tokens (``top-3'', its main configuration) and greedy decoding (``greedy''), both with beam size one and temperature 0.5. At this temperature, LLaDA's sampler already draws every token at random, so sampling from the top-3 tokens adds few tokens of the favored parity (a match ratio of 0.666 against 0.660 with greedy decoding and 0.50 without a watermark), and the two versions detect about as many responses. With greedy decoding at temperature 0 instead, dgMARK detects only 61.8\% of the responses at 1\% FPR, at a log-perplexity of 1.394, because 23\% of its unwatermarked responses then repeat text, which raises its calibrated thresholds. Its three-beam search is 2.6 times slower and was not run on all prompts. Detection uses the official match-ratio statistic. The dgMARK paper evaluates continuations of news articles and stories and keeps only responses longer than 200 tokens, whereas we keep every response of every method. All methods use the same prompts, generation length, block length and temperature.

\FloatBarrier
\section{The foresight predictor}
\label{app:learned}
This appendix describes the training data, architecture and training of the foresight predictor, its fidelity on held-out decision states, and examples of post-placement beliefs.

\paragraph{Data.} We decode 1{,}612 distinct training prompts once each with the watermark in the low-confidence order, at one of four strengths ($\delta'=1.0$, 1.5, 2.0 and 2.5 for 412, 407, 410 and 383 prompts), with exact foresight and the band network of Appendix~\ref{app:partition} as the partition. A \emph{decision state} is a commit whose second most likely tilted candidate has probability at least 0.1. At every decision state, we place each of the top-8 candidates and re-run the generator. In a window of 13 positions around $j$, we record the inputs of the generator's top eight transformer layers and its final states, the candidates and their log-probabilities, and the exact post-placement top-32 beliefs of every masked position in the window. With one commit per step, the next forward pass is the post-placement pass of the committed token, so every second commit is also recorded with its one committed candidate at no extra cost. The checkpoint we use was trained on the states of these 1{,}612 decodings (59{,}425 decision-state records).

These training decodings differ from deployment in three ways. They use exact foresight instead of the predictor and the band network instead of the learned bands as the partition, and they record only decision states (second most likely tilted candidate of probability at least 0.1), whereas deployment uses the predictor wherever this probability is at least 0.02. The predictor's inputs and targets, the generator's states and post-placement beliefs, depend neither on the partition nor on how foresight was computed, so these differences only change which states are sampled. The predictor's fidelity carries over to the states of the learned bands. Evaluated on 1{,}329 decision states from decoding 48 of the evaluation prompts at $\delta=3$ with the learned bands and exact foresight, the predictor recovers the top-1 token in 60\% of the cases, explains 47\% of the variance of exact foresight and ranks the exact best candidate first in 54\% of the states, against 25\%, $-$5\% and 27\% for the current beliefs.

\paragraph{Architecture.} Given the inputs $h_{j-6},\dots,h_{j+6}$ of the generator's top eight transformer layers $\mathcal{B}$ in the current forward pass, an adapter $a_\theta$ rewrites the input at position $j$ for candidate $v$, $h'_j=h_j+a_\theta(h_j,E[v],\log p(v))$, with $E[v]$ the generator's input embedding of $v$. Fine-tuned copies $\mathcal{B}_\theta$ of the eight layers are run on the 13 positions with and without the rewrite, and the belief about a neighbor $j{+}d$ is predicted as
\begin{equation}
\label{eq:predictor}
\hat \pi_{j+d}(\cdot\mid v)=\mathrm{softmax}\big(W[r_{j+d}+\Delta_{j+d}]/T\big),
\end{equation}
where $\Delta=\mathrm{LN}(\mathcal{B}_\theta(h'))-\mathrm{LN}(\mathcal{B}_\theta(h))$, $r$ are the generator's final states, and $\mathrm{LN}$ and $W$ its final normalization and output projection. With $a_\theta=0$ the prediction equals the current belief.

\paragraph{Training.} Table~\ref{tab:predtrain} gives the training setup. The loss is the cross-entropy to the exact top-32 post-placement beliefs and their remaining mass, at temperature $T$, summed over the masked positions of the window (weight 1 for $j{-}1$, $j{+}1$ and $j{+}2$, which foresight reads, and 0.3 for the others). On the 2{,}002 validation records of these decodings, the predicted top-1 token of the neighbors $j{-}1$, $j{+}1$, $j{+}2$ agrees with the exact post-placement one in 55.9\% of the cases after one epoch and 61.2\% after five (cross-entropy 1.99 and 1.68 nats), against 25.5\% (4.51 nats) for the current beliefs. We use the checkpoint after the fourth epoch (60.8\%), the latest when our evaluation runs started. Fine-tuning only the top two of the eight layers reached 56.7\% after three epochs, and a fine-tuned copy of the last layer alone, 47.4\%.

\begin{table}[h]
\centering
\small
\setlength{\tabcolsep}{3pt}
\begin{tabular}{@{}p{0.3\columnwidth}p{0.66\columnwidth}@{}}
\toprule
Trainable parameters & 1.745B (8 layers) + 12.6M (adapter) \\
Adapter & 2-layer MLP, width 1{,}024, zero-initialized output \\
Optimizer & AdamW, $(\beta_1,\beta_2)=(0.9,0.98)$, weight decay 0.01 \\
Learning rates & $2\cdot10^{-5}$ (layers), $3\cdot10^{-4}$ (adapter) \\
Schedule & 300 warm-up steps, cosine, 6 epochs planned \\
Batch & 12 decision states (8 candidates each) or 96 committed-candidate states \\
Gradient clipping & 1.0 \\
Held-out prompts & 5\% (chosen by a hash of the prompt) \\
Hardware, time & one A6000 GPU, about 1 hour per epoch \\
\bottomrule
\end{tabular}
\caption{Training of the foresight predictor.}
\label{tab:predtrain}
\end{table}

\IfFileExists{tables/table_fidelity.tex}{\begin{table}[t]
\centering
\small
\setlength{\tabcolsep}{3.5pt}
\begin{tabular}{@{}lrrrr@{}}
\toprule
Foresight & Top-1 & Corr. & $R^2$ & Argmax \\
\midrule
Current beliefs & 26\% & 0.39 & $-$2\% & 28\% \\
Predictor & \textbf{61\%} & \textbf{0.73} & \textbf{49\%} & \textbf{54\%} \\
\textit{Top-1 token known} & -- & 0.91 & 68\% & 55\% \\
\bottomrule
\end{tabular}
\caption{Fidelity of approximate foresight on 2{,}397 held-out decision states of LLaDA-8B-Instruct in its low-confidence order, with the top-8 candidates of each state. Top-1 is the share of cases in which a masked neighbor's most likely token under the approximate beliefs is its exact post-placement one. Corr.\ and $R^2$ compare the approximate with the exact foresight $\Phi(v)$, centered within each state, by Pearson correlation and by the share of variance explained without refitting. Argmax is the share of states in which the approximation ranks the exact best candidate first. \textit{Top-1 token known} is a reference given each neighbor's exact post-placement top-1 token. Bold marks the best approximation in each column.}
\label{tab:fidelity}
\end{table}
}{}
\paragraph{Fidelity.} On 2{,}397 held-out decision states, those of the 5\% held-out prompts of the predictor's training decodings at which exact foresight varies across the candidates, Table~\ref{tab:fidelity} compares the foresight $\Phi(v)$ of the eight candidates computed from the current, the predicted and the exact beliefs, with the learned bands and centered within each state. We compute $R^2$ as $1-\sum(\hat\Phi-\Phi)^2/\sum(\Phi-\bar\Phi)^2$ without refitting, so a constant estimate scores 0 and a misleading one scores below 0.

\begin{table*}[t]
\centering
\small
\setlength{\tabcolsep}{2.5pt}
\begin{tabular}{@{}llll@{}}
\toprule
Context of $j$ & At $j$ & Belief about $j{+}1$ & Belief about $j{+}2$ \\
\midrule
\ldots have received investment from \M{} & \textit{Before placing} & \textit{wide}~0.27, \textit{with}~0.27 & \textit{of}~0.50, \textit{range}~0.30 \\
 & Place \textit{individuals} (0.59) & \textit{and}~0.43, \textit{with}~0.43 & \textit{small}~0.96, \textit{institutions}~0.02 \\
 & Place \textit{a} (0.36) & \textit{diverse}~0.79, \textit{range}~0.12 & \textit{of}~0.53, \textit{range}~0.47 \\
\midrule
\ldots typically made in private \M{} & \textit{Before placing} & \textit{before}~0.42, \textit{that}~0.22 & \textit{have}~0.23, \textit{they}~0.21 \\
 & Place \textit{companies} (0.79) & \textit{that}~0.73, \textit{before}~0.24 & \textit{have}~0.61, \textit{they}~0.32 \\
 & Place \textit{equity} (0.20) & \textit{funds}~0.90, \textit{or}~0.09 & \textit{,}~0.85, \textit{venture}~0.11 \\
\midrule
\ldots preceding ones . The \M{} & \textit{Before placing} & \textit{sequence}~0.50, \textit{common}~0.16 & \textit{Fibonacci}~0.36, \textit{numbers}~0.15 \\
 & Place \textit{sequence} (0.65) & \textit{starts}~0.82, \textit{typically}~0.13 & \textit{with}~0.81, \textit{starts}~0.06 \\
 & Place \textit{most} (0.16) & \textit{commonly}~0.73, \textit{common}~0.27 & \textit{used}~0.79, \textit{Fibonacci}~0.20 \\
\midrule
\ldots \M{} you want in \M{} & \textit{Before placing} & \textit{Fibonacci}~0.54, \textit{sequence}~0.42 & \textit{.}~0.92, \textit{(}~0.07 \\
 & Place \textit{the} (0.50) & \textit{sequence}~0.75, \textit{Fibonacci}~0.19 & \textit{.}~0.95, \textit{(}~0.04 \\
 & Place \textit{your} (0.50) & \textit{profit}~0.93, \textit{Fibonacci}~0.05 & \textit{.}~0.82, \textit{target}~0.13 \\
\midrule
\ldots from a donor . \M{} & \textit{Before placing} & \textit{diseased}~0.91, \textit{is}~0.05 & \textit{is}~0.35, \textit{kidney}~0.27 \\
 & Place \textit{This} (0.59) & \textit{is}~0.86, \textit{procedure}~0.06 & \textit{is}~0.74, \textit{because}~0.21 \\
 & Place \textit{The} (0.40) & \textit{diseased}~0.87, \textit{reason}~0.07 & \textit{kidney}~0.56, \textit{for}~0.27 \\
\midrule
\ldots 5 -point extension , \M{} & \textit{Before placing} & \textit{would}~0.60, \textit{find}~0.20 & \textit{find}~0.58, \textit{look}~0.19 \\
 & Place \textit{you} (0.78) & \textit{would}~0.96, \textit{find}~0.03 & \textit{find}~0.52, \textit{look}~0.15 \\
 & Place \textit{the} (0.22) & \textit{Fibonacci}~0.64, \textit{first}~0.09 & \textit{number}~0.53, \textit{numbers}~0.19 \\
\bottomrule
\end{tabular}
\caption{Real decision states of LLaDA-8B-Instruct in its low-confidence order, on prompts of the training pool. Each state gives the context of the committed position $j$ and the top-2 beliefs about the masked positions $j{+}1$ and $j{+}2$, before placement and after placing each of the two most likely candidates, whose probabilities at $j$ are in parentheses.}
\label{tab:examples}
\end{table*}

\paragraph{Examples.} Table~\ref{tab:examples} shows decision states of LLaDA-8B-Instruct in which the two most likely candidates lead the model to different neighbors. Before placement, the belief about $j{+}1$ is one distribution for all candidates. Because masked positions are predicted independently, it is often a mixture of the continuations of the different candidates (\emph{wide}/\emph{with} after ``investment from'') and sometimes a copy of the candidate itself (\emph{sequence} after ``The''). After placement, each candidate determines a different, usually sharper prediction. For example, \emph{individuals} is followed by \emph{and}/\emph{with}, and \emph{a} by \emph{diverse}. Foresight computed before placement cannot see this difference.

\paragraph{Design notes.} We build the predictor from the generator's top layers for three reasons. These layers already map the generator's hidden states to beliefs, so the predictor only has to learn how placing a candidate changes them, and its zero-initialized adapter makes it start from the current beliefs. It also reuses the inputs of these layers from the current forward pass, so the candidates of a commit share one pass through eight layers instead of a pass through the whole generator each. Finally, it fits exact foresight better than a predictor that does not reuse the generator's layers. In an earlier comparison, with both trained on the same decision states and scored on 1{,}098 held-out ones, it explained 25\% of the variance of exact foresight, against 18\% for a window transformer over the final hidden states, the architecture we use for Dream (Appendix~\ref{app:codeqa}), and both recovered the neighbors' post-placement top-1 token in 42\% of the cases. Two other alternatives were less attractive. First, re-running only the last $m$ layers of the generator, with the input at $j$ set to its exact post-placement value, recovers the neighbors' post-placement top-1 token in 28, 32, 44 and 64\% of the cases for $m=1,2,4,8$ (current beliefs: 26\%). The placed token thus changes the neighbors' predictions through the whole depth of the network, and even $m=8$ needs the exact input to these layers, that is, a pass through the first 24 layers per candidate. The predictor instead learns this input with its adapter. Second, sharpening the predicted beliefs (temperature 0.35 or 0.25 instead of 0.5) lowered the gain at equal KL in an experiment with an earlier configuration. Foresight averages the green indicator over the predicted beliefs, and a sharpened prediction commits to one neighbor and loses more often when it is wrong.

\section{Null distribution and calibration over keys}
\label{app:keys}
This appendix covers the comparison with DLM-KGW over five keys, the false-positive rates on text without a watermark, and why we calibrate thresholds over 100 keys.
\paragraph{Comparison over keys.} Table~\ref{tab:mkey} repeats the comparison with DLM-KGW under five generation keys (our main key 42 and keys 1 to 4) on 300 of the WaterBench prompts, at $\delta=2$ and 3. With the thresholds of the main text, pooled over 100 keys, our watermark exceeds theirs by 12.7 and 4.5 points in TPR@1\% and by 11.9 and 10.3 points in TPR@0.1\%. With each key's own thresholds, the differences are 9.9 and 5.1 points in TPR@1\% and 13.4 and 13.2 points in TPR@0.1\%. Over the same keys and prompts, our mean log-perplexity is 0.02 and 0.05 lower than theirs at $\delta=2$ and 3 (neither significant), and it varies less across keys (1.31--1.33 and 1.38--1.41, against 1.31--1.47 and 1.39--1.60).
\begin{table}[t]
\centering
\small
\setlength{\tabcolsep}{2.2pt}
\begin{tabular}{@{}llrrr@{}}
\toprule
Threshold & $\delta$ & Ours & DLM-KGW & Difference \\
\midrule
\multicolumn{5}{@{}l}{\emph{TPR@1\% (\%)}} \\
Key-pooled & 2 & 65.2 & 52.5 & \textbf{$+$12.7} {\footnotesize\color{black!60}[2.9, 20.9]} \\
 & 3 & 89.9 & 85.4 & \textbf{$+$4.5} {\footnotesize\color{black!60}[1.3, 7.9]} \\
\addlinespace[2pt]
Own-key & 2 & 59.5 & 49.5 & $+$9.9 {\footnotesize\color{black!60}[$-$1.1, 20.7]} \\
 & 3 & 88.5 & 83.4 & \textbf{$+$5.1} {\footnotesize\color{black!60}[0.9, 9.3]} \\
\midrule
\multicolumn{5}{@{}l}{\emph{TPR@0.1\% (\%)}} \\
Key-pooled & 2 & 40.8 & 28.9 & \textbf{$+$11.9} {\footnotesize\color{black!60}[2.3, 20.0]} \\
 & 3 & 80.1 & 69.7 & \textbf{$+$10.3} {\footnotesize\color{black!60}[5.0, 16.1]} \\
\addlinespace[2pt]
Own-key & 2 & 40.9 & 27.5 & \textbf{$+$13.4} {\footnotesize\color{black!60}[1.3, 25.9]} \\
 & 3 & 79.7 & 66.5 & \textbf{$+$13.2} {\footnotesize\color{black!60}[3.9, 22.6]} \\
\midrule
\multicolumn{5}{@{}l}{\emph{log-PPL}} \\
 & 2 & 1.325 & 1.349 & $-$0.024 {\footnotesize\color{black!60}[$-$0.112, 0.026]} \\
 & 3 & 1.402 & 1.452 & $-$0.051 {\footnotesize\color{black!60}[$-$0.156, 0.016]} \\
\bottomrule
\end{tabular}
\caption{TPR@1\%, TPR@0.1\% and log-PPL averaged over five generation keys, on 300 of the WaterBench-600 prompts with LLaDA-8B-Instruct in its low-confidence order. Key-pooled thresholds are the calibrated thresholds of the other tables, and own-key thresholds are calibrated on the unwatermarked responses re-detected with each key alone. The difference is ours minus DLM-KGW, with a 95\% CI from a bootstrap over prompts (paired) and keys, and bold marks the differences whose CI excludes zero.}
\label{tab:mkey}
\end{table}

\begin{table*}[t]
\centering
\small
\setlength{\tabcolsep}{3pt}
\begin{tabular}{@{}lrrrrrrr@{}}
\toprule
 & & \multicolumn{2}{c}{DLM-KGW} & \multicolumn{2}{c}{Unkeyed learned list} & \multicolumn{2}{c}{Learned bands} \\
 & & \multicolumn{2}{c}{(previous-token hash)} & \multicolumn{2}{c}{(our ablation)} & \multicolumn{2}{c}{(ours)} \\
\cmidrule(lr){3-4}\cmidrule(lr){5-6}\cmidrule(lr){7-8}
Text & $n$ & Green (\%) & FPR (\%) & Green (\%) & FPR (\%) & Green (\%) & FPR (\%) \\
\midrule
\multicolumn{8}{@{}l}{\emph{Human-written text}} \\
ELI5 & 532 & 24.9 & 0.6 & 31.0 & 30.1\rlap{$^\dagger$} & 25.3 & 0.9 \\
FiQA & 200 & 25.1 & 1.0 & 28.6 & 15.5\rlap{$^\dagger$} & 25.1 & 1.0 \\
C4 news & 284 & 25.2 & 0.4 & 28.4 & 3.5\rlap{$^\dagger$} & 25.1 & 1.8 \\
CNN/DailyMail & 1{,}000 & 25.4 & 2.1\rlap{$^\dagger$} & 28.6 & 13.5\rlap{$^\dagger$} & 25.2 & 0.8 \\
\midrule
\multicolumn{8}{@{}l}{\emph{LLaDA-8B-Instruct responses}} \\
Low-confidence & 581 & 24.3 & 0.2 & 26.2 & 5.5\rlap{$^\dagger$} & 25.4 & 1.2 \\
Random & 567 & 24.4 & 0.4 & 26.2 & 4.6\rlap{$^\dagger$} & 25.2 & 0.7 \\
\bottomrule
\end{tabular}
\caption{Green rate and false-positive rate (FPR) of three kinds of green lists on human-written text and on unwatermarked LLaDA-8B-Instruct responses to WaterBench-600, with one key and $\gamma=25\%$. The green rate is the share of scored units that are green, and the FPR is that of the exact test at a nominal 1\% level (Proposition~\ref{prop:null} for learned bands). Texts are scored on their first 256 tokens, and $n$ counts those with at least 50 scored units. The unkeyed learned list is a network that maps the context to a green list without a key, trained to keep the generator's tokens green at rate $\gamma$. A dagger marks an FPR significantly above 1\% (exact Clopper--Pearson 95\% CI).}
\label{tab:nullfull}
\end{table*}

\paragraph{False-positive rates.} Table~\ref{tab:nullfull} gives the false-positive rates and the green rates behind them. The unkeyed learned list makes 28--31\% of the units of human text green instead of $\gamma=25\%$, so every nominal $p$-value is too small. Learned bands stay within half a point of 25\% on every corpus, and the previous-token hash of DLM-KGW within 0.7 points.

\paragraph{Calibration over keys.} Over a random key, the null distribution is exact both for learned bands (Proposition~\ref{prop:null}) and for the binomial test of DLM-KGW, which also counts each distinct unit once. Because the count is discrete, an exact test rejects at most at its nominal level. For a fixed key and a fixed corpus, this no longer holds exactly, because texts share units such as frequent bigrams, and whether the key makes these units green or red moves the scores of many texts in the same direction. We re-detected the 600 unwatermarked low-confidence responses with 100 keys. For the official detector of DLM-KGW, the empirical 1\% threshold ranges over keys from 1.75 to 2.98, and that of the key with which we ran their code (the \emph{official key}, 2.01) lies in the lowest fifth. For learned bands, whose two-token contexts recur less, it ranges from 2.12 to 2.93 (our key: 2.50). Pooled over the keys, the thresholds are 2.33 for the official detector and 2.37 for learned bands, close to the 2.33 of a standard normal variable.

A key-specific shift would cancel in a single-key calibration if watermarked text inherited it, but it does not. Under the official key, the 202 units that occur in at least 5\% of the unwatermarked responses contribute $-0.17$ to the mean $z$-score of these responses and $+0.18$ ($\delta=3$) and $+0.30$ ($\delta=4$) to that of the watermarked responses, because the watermark steers the generator away from those units when they are red. A single-key calibration therefore depends strongly on the key, and we calibrate on the pooled scores. In the random order, the official key is typical for the official detector (threshold 2.31, against 2.37 pooled over keys). With each detector's own nominal $p$-value threshold instead (Table~\ref{tab:mainfull}), the differences between ours and DLM-KGW at $\delta=2$ and 3 are larger (21.2 and 5.7 points in nominal TPR@1\%). dgMARK has no key, so its thresholds come from its 600 unwatermarked scores alone. At 0.1\% FPR, the threshold is essentially their maximum.

\section{Additional WaterBench results}
\label{app:more}
\begin{table*}[t]
\centering
\small
\setlength{\tabcolsep}{5pt}
\begin{tabular}{llrrrrrrr}
\toprule
 & & \multicolumn{4}{c}{TPR (\%) $\uparrow$} & \multicolumn{2}{c}{log-PPL $\downarrow$} & Judge \\
\cmidrule(lr){3-6}\cmidrule(lr){7-8}
 & & \multicolumn{2}{c}{Nominal} & \multicolumn{2}{c}{Calibrated} & & & \\
\cmidrule(lr){3-4}\cmidrule(lr){5-6}
Method & $\delta$ & @1\% & @0.1\% & @1\% & @0.1\% & Mean & Median & $\uparrow$ \\
\midrule
No watermark & -- & 0.2 & 0.0 & -- & -- & 1.297 & 1.273 & 9.12 \\
dgMARK, greedy & -- & 90.8 & 83.3 & 92.7 & 84.2 & 1.550 & 1.531 & 9.07 \\
dgMARK, top-3 & -- & 93.0 & 83.8 & 92.3 & 84.3 & 1.557 & 1.547 & 9.12 \\
\midrule
DLM-KGW & 2 & 45.8 & 20.5 & 51.3 & 24.8 & \textbf{1.316} & \textbf{1.297} & 9.13 \\
\Method{} & 2 & \textbf{67.0} & \textbf{40.3} & \textbf{69.5} & \textbf{44.0} & 1.326 & 1.328 & \textbf{9.29} \\
\midrule
DLM-KGW & 3 & 82.8 & 66.7 & 85.7 & 70.0 & \textbf{1.387} & 1.409 & 8.97 \\
\Method{} & 3 & \textbf{88.5} & \textbf{78.0} & \textbf{89.3} & \textbf{79.3} & 1.398 & \textbf{1.388} & \textbf{9.30} \\
\midrule
DLM-KGW & 4 & 94.8 & 89.0 & 96.0 & 91.3 & 1.506 & 1.541 & 8.62 \\
\Method{} & 4 & \textbf{96.0} & \textbf{92.0} & \textbf{96.2} & \textbf{92.7} & \textbf{1.469} & \textbf{1.465} & \textbf{9.32} \\
\midrule
DLM-KGW & 5 & 97.3 & 95.0 & 97.7 & 95.3 & 1.602 & 1.650 & 8.15 \\
\Method{} & 5 & \textbf{97.7} & \textbf{95.8} & \textbf{98.3} & \textbf{96.0} & \textbf{1.575} & \textbf{1.592} & \textbf{9.26} \\
\bottomrule
\end{tabular}
\caption{Full results on WaterBench-600 with LLaDA-8B-Instruct in its low-confidence order, with nominal and calibrated TPRs (\S\ref{sec:setup}) and the mean and median log-PPL. In the No watermark row, the nominal columns give the false-positive rate. Bold marks the better of DLM-KGW and ours at the same $\delta$ without a significance test, and Table~\ref{tab:paired} gives the paired differences.}
\label{tab:mainfull}
\end{table*}

\begin{table*}[t]
\centering
\small
\setlength{\tabcolsep}{5pt}
\begin{tabular}{@{}lcrrrrrr@{}}
\toprule
 & & \multicolumn{4}{c}{Same $\delta$} & \multicolumn{2}{c}{Equal log-PPL} \\
\cmidrule(lr){3-6}\cmidrule(lr){7-8}
 & & \multicolumn{2}{c}{$\Delta$TPR (points)} & & & \multicolumn{2}{c}{$\Delta$TPR (points)} \\
\cmidrule(lr){3-4}\cmidrule(lr){7-8}
Order & $\delta$ & @1\% & @0.1\% & $\Delta$log-PPL & $\Delta$Judge & @1\% & @0.1\% \\
\midrule
Low-confidence & 2 & \textbf{$+$18.2} & \textbf{$+$19.2} & $+$0.010 & \textbf{$+$0.16} & \textbf{$+$13.6} & $+$13.1 \\
 & & {\footnotesize\color{black!60}[13.2, 23.2]} & {\footnotesize\color{black!60}[14.2, 24.3]} & {\footnotesize\color{black!60}[$-$0.014, 0.033]} & {\footnotesize\color{black!60}[0.01, 0.31]} & {\footnotesize\color{black!60}[2.3, 19.6]} & {\footnotesize\color{black!60}[$-$1.8, 20.4]} \\
 & 3 & \textbf{$+$3.7} & \textbf{$+$9.3} & $+$0.011 & \textbf{$+$0.33} & $+$2.7 & \textbf{$+$7.4} \\
 & & {\footnotesize\color{black!60}[0.5, 6.8]} & {\footnotesize\color{black!60}[5.2, 13.8]} & {\footnotesize\color{black!60}[$-$0.015, 0.038]} & {\footnotesize\color{black!60}[0.18, 0.49]} & {\footnotesize\color{black!60}[$-$1.0, 11.2]} & {\footnotesize\color{black!60}[1.3, 18.5]} \\
 & 4 & $+$0.2 & $+$1.3 & \textbf{$-$0.037} & \textbf{$+$0.70} & \textbf{$+$3.4} & \textbf{$+$8.0} \\
 & & {\footnotesize\color{black!60}[$-$1.8, 2.2]} & {\footnotesize\color{black!60}[$-$1.2, 4.0]} & {\footnotesize\color{black!60}[$-$0.071, $-$0.007]} & {\footnotesize\color{black!60}[0.52, 0.89]} & {\footnotesize\color{black!60}[0.3, 6.2]} & {\footnotesize\color{black!60}[1.8, 13.0]} \\
 & 5 & $+$0.7 & $+$0.7 & $-$0.027 & \textbf{$+$1.11} & $+$1.1 & $+$1.8 \\
 & & {\footnotesize\color{black!60}[$-$0.7, 2.0]} & {\footnotesize\color{black!60}[$-$1.2, 2.7]} & {\footnotesize\color{black!60}[$-$0.063, 0.008]} & {\footnotesize\color{black!60}[0.91, 1.33]} & {\footnotesize\color{black!60}[$-$0.2, 2.5]} & {\footnotesize\color{black!60}[$-$0.1, 4.0]} \\
\midrule
Random & 3 & 0.0 & \textbf{$+$7.0} & $-$0.036 & -- & -- & -- \\
 & & {\footnotesize\color{black!60}[$-$3.7, 3.7]} & {\footnotesize\color{black!60}[2.3, 12.2]} & {\footnotesize\color{black!60}[$-$0.078, 0.010]} &  &  &  \\
 & 4 & $-$2.3 & $-$1.8 & \textbf{$-$0.140} & -- & \textbf{$+$6.6} & \textbf{$+$20.3} \\
 & & {\footnotesize\color{black!60}[$-$4.7, 0.0]} & {\footnotesize\color{black!60}[$-$4.8, 1.3]} & {\footnotesize\color{black!60}[$-$0.174, $-$0.106]} &  & {\footnotesize\color{black!60}[2.8, 9.4]} & {\footnotesize\color{black!60}[13.6, 24.6]} \\
\bottomrule
\end{tabular}
\caption{Paired differences of ours minus DLM-KGW on WaterBench-600 with LLaDA-8B-Instruct in its low-confidence and random orders, at the same $\delta$ and at equal log-PPL (\S\ref{sec:setup}). Brackets give paired bootstrap 95\% CIs, and bold marks the estimates whose CI excludes zero. A dash means that our mean log-PPL lies below the range of DLM-KGW (equal log-PPL) or that the judge was not run (random order).}
\label{tab:paired}
\end{table*}

Table~\ref{tab:paired} gives the paired differences behind Table~\ref{tab:main}, at the same strength and at equal log-perplexity, for both decoding orders. Table~\ref{tab:mainfull} extends Table~\ref{tab:main} with the nominal TPR@1\% and TPR@0.1\% and the median log-perplexity, which is less sensitive than the mean to the few very short answers whose per-token log-perplexity can be extreme.

\paragraph{Loops.} Table~\ref{tab:loopfree} separates the effect of repetition loops. Loops lower both the evidence and the log-perplexity of a response. In a looping response, more than a third of the tokens repeat an earlier unit and are not counted (about an eighth in the other responses, with two-token units). For DLM-KGW, looping responses therefore have a mean $z$-score 0.4--1.1 lower than that of the other responses ($\delta=2$--5). Looping responses also have a log-perplexity 0.2--0.5 lower for every method, so loops lower the mean log-perplexity of DLM-KGW. The judge rates looping responses 7.2--7.8 for ours, 5.3--7.0 for theirs and 7.0 without a watermark. With their loop rule (more than half of the bigrams repeated), the share of looping DLM-KGW responses still grows with $\delta$, from 4.8 to 11.7\% (3.7\% without a watermark), while ours stays at 1.5--2.0\%. In the random order, the main setting of their paper, responses rarely loop for any method (2.8--4.7\% with our rule), consistent with their observation that the watermark does not increase repetitiveness.

\begin{table*}[t]
\centering
\small
\setlength{\tabcolsep}{4.2pt}
\begin{tabular}{@{}crrrrrrrr@{}}
\toprule
 & \multicolumn{2}{c}{Rep.} & \multicolumn{2}{c}{$\Delta$Judge vs.\ No watermark} & \multicolumn{2}{c}{$\Delta$Judge vs.\ DLM-KGW} & \multicolumn{2}{c}{$\Delta$log-PPL vs.\ DLM-KGW} \\
\cmidrule(lr){2-3}\cmidrule(lr){4-5}\cmidrule(lr){6-7}\cmidrule(lr){8-9}
$\delta$ & DLM-KGW & Ours & All & No loop & All & No loop & All & No loop \\
\midrule
2 & 19.3 & 18.2 & \textbf{$+$0.17} & $+$0.01 & \textbf{$+$0.16} & $-$0.06 & $+$0.010 & $-$0.008 \\
 & & & {\footnotesize\color{black!60}[0.04, 0.31]} & {\footnotesize\color{black!60}[$-$0.08, 0.11]} & {\footnotesize\color{black!60}[0.01, 0.31]} & {\footnotesize\color{black!60}[$-$0.14, 0.02]} & {\footnotesize\color{black!60}[$-$0.014, 0.033]} & {\footnotesize\color{black!60}[$-$0.035, 0.019]} \\
3 & 23.3 & 14.0 & \textbf{$+$0.18} & $+$0.03 & \textbf{$+$0.33} & $+$0.03 & $+$0.011 & $-$0.007 \\
 & & & {\footnotesize\color{black!60}[0.05, 0.32]} & {\footnotesize\color{black!60}[$-$0.06, 0.12]} & {\footnotesize\color{black!60}[0.18, 0.49]} & {\footnotesize\color{black!60}[$-$0.06, 0.13]} & {\footnotesize\color{black!60}[$-$0.015, 0.038]} & {\footnotesize\color{black!60}[$-$0.039, 0.024]} \\
4 & 26.3 & 14.3 & \textbf{$+$0.21} & 0.00 & \textbf{$+$0.70} & $+$0.08 & \textbf{$-$0.037} & \textbf{$-$0.097} \\
 & & & {\footnotesize\color{black!60}[0.07, 0.35]} & {\footnotesize\color{black!60}[$-$0.10, 0.10]} & {\footnotesize\color{black!60}[0.52, 0.89]} & {\footnotesize\color{black!60}[$-$0.03, 0.19]} & {\footnotesize\color{black!60}[$-$0.071, $-$0.007]} & {\footnotesize\color{black!60}[$-$0.131, $-$0.065]} \\
5 & 30.5 & 12.2 & \textbf{$+$0.15} & $-$0.04 & \textbf{$+$1.11} & \textbf{$+$0.18} & $-$0.027 & \textbf{$-$0.139} \\
 & & & {\footnotesize\color{black!60}[0.01, 0.28]} & {\footnotesize\color{black!60}[$-$0.12, 0.05]} & {\footnotesize\color{black!60}[0.91, 1.33]} & {\footnotesize\color{black!60}[0.06, 0.31]} & {\footnotesize\color{black!60}[$-$0.063, 0.008]} & {\footnotesize\color{black!60}[$-$0.174, $-$0.102]} \\
\bottomrule
\end{tabular}
\caption{Repetition loops and paired quality differences on WaterBench-600 with LLaDA-8B-Instruct in its low-confidence order. Without a watermark, Rep.\ is 19.3\%. Each $\Delta$ is ours minus the compared method, over all 600 prompts (All) or over the prompts on which neither compared response loops (No loop, 385--446 prompts). Brackets give paired bootstrap 95\% CIs, and bold marks the estimates whose CI excludes zero.}
\label{tab:loopfree}
\end{table*}

\IfFileExists{tables/table_judges.tex}{\begin{table}[t]
\centering
\small
\setlength{\tabcolsep}{3.4pt}
\begin{tabular}{@{}lcrrrr@{}}
\toprule
 & & \multicolumn{2}{c}{Quality $\uparrow$} & \multicolumn{2}{c}{Similarity $\uparrow$} \\
\cmidrule(lr){3-4}\cmidrule(l){5-6}
Method & $\delta$ & Gemma & Qwen & Gemma & Qwen \\
\midrule
No watermark & -- & 9.12 & 8.55 & -- & -- \\
dgMARK, greedy & -- & 9.07 & 8.49 & 6.40 & 7.00 \\
dgMARK, top-3 & -- & 9.12 & 8.47 & 6.34 & 6.95 \\
\midrule
DLM-KGW & 2 & 9.13 & 8.53 & 6.63 & 7.23 \\
\Method{} & 2 & 9.29 & 8.65 & 6.57 & 7.24 \\
\midrule
DLM-KGW & 3 & 8.97 & 8.39 & 6.41 & 7.00 \\
\Method{} & 3 & 9.30 & 8.66 & 6.42 & 7.12 \\
\midrule
DLM-KGW & 4 & 8.62 & 8.10 & 6.17 & 6.80 \\
\Method{} & 4 & 9.32 & 8.59 & 6.27 & 6.97 \\
\midrule
DLM-KGW & 5 & 8.15 & 7.76 & 5.93 & 6.55 \\
\Method{} & 5 & 9.26 & 8.59 & 6.07 & 6.89 \\
\bottomrule
\end{tabular}
\caption{Quality and similarity of the responses of Table~\ref{tab:main} (WaterBench-600, LLaDA-8B-Instruct, low-confidence order) under two judges, Gemma-4-12B-it and Qwen3-32B (FP8 weights), both greedy. Quality: mean of style and consistency (1--10) with the rubric of \citet{gloaguen2026watermarking} and the same instruction text for every method. Similarity: shared meaning with the unwatermarked response to the same prompt (0--10). Unwatermarked responses of another sampler reach 6.71 and 6.77 (Gemma) and 7.33 and 7.32 (Qwen).}
\label{tab:judges}
\end{table}
}{}
\paragraph{Second judge and similarity.} Table~\ref{tab:judges} re-scores the responses of Table~\ref{tab:main} with a second judge, Qwen3-32B \citep{yang2025qwen3} with FP8 weights, and also rates how much meaning each response shares with the unwatermarked response to the same prompt, from 0 (no shared meaning) to 10 (equivalent meaning). The two judges agree closely. Over the 6{,}600 responses, their quality scores correlate at 0.84, and over the 11 methods, their means correlate at 0.99 (similarity: 0.85 over 7{,}200 pairs and 0.96 over 12 sets). Qwen3-32B also rates ours above DLM-KGW at every strength, by 0.12, 0.27, 0.49 and 0.83 at $\delta=2$ to 5 (all significant), and at least as high as text without a watermark. Unwatermarked responses of another sampler share 6.7 (Gemma) and 7.3 (Qwen) with the main unwatermarked responses, the level that sampling alone reaches. Both watermarks come close to it at $\delta=2$ and fall below it as $\delta$ grows, and ours stays closer to the unwatermarked responses than DLM-KGW at $\delta=4$ and 5 under Qwen3-32B (by 0.17 and 0.33, both significant) and under Gemma-4-12B-it (by 0.10 and 0.14, not significant).

\IfFileExists{tables/table_tasks.tex}{\begin{table}[t]
\centering
\small
\setlength{\tabcolsep}{2.5pt}
\begin{tabular}{@{}lcrrrrrr@{}}
\toprule
 & & \multicolumn{2}{c}{ELI5} & \multicolumn{2}{c}{FiQA} & \multicolumn{2}{c}{AlpacaFarm} \\
\cmidrule(lr){3-4}\cmidrule(lr){5-6}\cmidrule(lr){7-8}
 & & TPR &  & TPR &  & TPR &  \\
Method & $\delta$ & @1\% & Len. & @1\% & Len. & @1\% & Len. \\
\midrule
No watermark & -- & -- & 232 & -- & 219 & -- & 214 \\
dgMARK, top-3 & -- & 99.5 & 219 & 95.5 & 204 & 82.0 & 209 \\
\midrule
DLM-KGW & 2 & 59.0 & 229 & 48.5 & 227 & 46.5 & 217 \\
\Method{} & 2 & \textbf{77.5} & 230 & \textbf{77.0} & 224 & \textbf{54.0} & 214 \\
\midrule
DLM-KGW & 3 & 93.0 & 234 & 86.5 & 230 & \textbf{77.5} & 221 \\
\Method{} & 3 & \textbf{98.0} & 232 & \textbf{96.5} & 226 & 73.5 & 214 \\
\midrule
DLM-KGW & 4 & 99.5 & 233 & 99.0 & 235 & \textbf{89.5} & 228 \\
\Method{} & 4 & 99.5 & 237 & \textbf{100.0} & 226 & 89.0 & 216 \\
\midrule
DLM-KGW & 5 & 100.0 & 231 & 100.0 & 239 & 93.0 & 226 \\
\Method{} & 5 & 100.0 & 243 & 100.0 & 224 & \textbf{95.0} & 220 \\
\bottomrule
\end{tabular}
\caption{Results per task on WaterBench-600 (200 prompts each) with LLaDA-8B-Instruct in its low-confidence order. TPR@1\% uses the calibrated thresholds of Table~\ref{tab:main}, and Len.\ is the mean response length in tokens. On AlpacaFarm, the share of responses shorter than 20 tokens is 4.5\% without a watermark, 0.5--2.0\% for DLM-KGW, 2.0--4.0\% for ours and 6.5\% for dgMARK, and no ELI5 or FiQA response is that short. Bold marks the better of DLM-KGW and ours at the same $\delta$.}
\label{tab:tasks}
\end{table}
}{}
\paragraph{Per task.} Table~\ref{tab:tasks} splits the results by task. Ours detects more responses than DLM-KGW on ELI5 and FiQA at $\delta=2$ and 3, and as many at $\delta=4$ and 5, where both detect 99--100\%. AlpacaFarm is the hardest task for both, partly because some of its instructions have short answers and many have answers with little freedom. There, neither method detects consistently more. DLM-KGW lengthens AlpacaFarm responses at higher strengths, which makes them easier to detect, whereas ours keeps their length.

\paragraph{Random order.} In LLaDA's random order, with the predictor trained on low-confidence decodings, ours matches DLM-KGW in TPR@1\% at $\delta=3$ and exceeds it by 7.0 points in TPR@0.1\%. At $\delta=4$, it detects slightly fewer responses (not significant). Its log-perplexity is lower by 0.04 (not significant) and 0.14 at $\delta=3$ and 4 (Table~\ref{tab:paired}). At equal log-perplexity, ours detects 6.6 and 20.3 points more in TPR@1\% and TPR@0.1\% at $\delta=4$. At $\delta=3$, our log-perplexity lies below their range, so the two cannot be compared at equal log-perplexity. Thresholds come from the 600 unwatermarked random-order responses, pooled over 100 keys for both keyed detectors.

\section{Paired contrasts of the crossed ablation}
\label{app:contrasts}
\IfFileExists{tables/table_facfull.tex}{\begin{table*}[t]
\centering
\small
\setlength{\tabcolsep}{5pt}
\begin{tabular}{@{}lllrrrrrr@{}}
\toprule
 & & & $\bar z$ & \multicolumn{2}{c}{TPR (\%) $\uparrow$} & log-PPL & Judge & Rep. \\
\cmidrule(lr){5-6}
$\delta$ & Partition & Foresight & $\uparrow$ & @1\% & @0.1\% & $\downarrow$ & $\uparrow$ & $\downarrow$ \\
\midrule
2 & Keyed hash & Current beliefs & 2.35 & 49.0 & 24.5 & 1.337 & 9.18 & 18.2 \\
 &  & Predictor & 2.61 & 59.5 & 31.7 & 1.351 & \textbf{9.29} & \textbf{17.2} \\
 & Learned bands & None ($\lambda=0$) & 2.06 & 39.3 & 15.3 & 1.317 & 9.22 & 16.3 \\
 &  & Current beliefs & 2.77 & 63.7 & 40.0 & \textbf{1.308} & 9.24 & 19.2 \\
\rowcolor{black!7} &  & Predictor & \textbf{2.95} & \textbf{69.5} & \textbf{44.0} & 1.326 & \textbf{9.29} & 18.2 \\
 &  & Exact$^\dagger$ & 3.10 & 73.2 & 47.7 & 1.326 & -- & 17.8 \\
\midrule
3 & Keyed hash & Current beliefs & 3.78 & 84.5 & 65.8 & 1.397 & \textbf{9.35} & 15.2 \\
 &  & Predictor & 3.98 & 86.5 & 71.2 & 1.403 & 9.31 & 15.5 \\
 & Learned bands & None ($\lambda=0$) & 3.11 & 70.7 & 49.2 & 1.372 & 9.30 & 16.2 \\
 &  & Current beliefs & 4.25 & 88.7 & 76.8 & \textbf{1.391} & 9.33 & \textbf{12.8} \\
\rowcolor{black!7} &  & Predictor & \textbf{4.35} & \textbf{89.3} & \textbf{79.3} & 1.398 & 9.30 & 14.0 \\
 &  & Exact$^\dagger$ & 4.66 & 93.3 & 83.8 & 1.378 & 9.27 & 16.8 \\
\midrule
4 & Keyed hash & Current beliefs & 5.27 & 94.5 & 90.0 & 1.507 & \textbf{9.40} & \textbf{10.7} \\
 &  & Predictor & 5.45 & 95.7 & 91.5 & 1.489 & 9.39 & 11.3 \\
 & Learned bands & None ($\lambda=0$) & 4.15 & 89.5 & 76.0 & 1.451 & 9.22 & 14.8 \\
 &  & Current beliefs & 5.69 & \textbf{96.3} & \textbf{92.8} & 1.482 & 9.25 & 12.7 \\
\rowcolor{black!7} &  & Predictor & \textbf{5.97} & 96.2 & 92.7 & \textbf{1.469} & 9.32 & 14.3 \\
 &  & Exact$^\dagger$ & 6.11 & 96.8 & 93.5 & 1.469 & 9.31 & 14.5 \\
\bottomrule
\end{tabular}
\caption{All metrics of the crossed ablation of Table~\ref{tab:fac}. TPR@1\% and TPR@0.1\% use calibrated thresholds. Bold marks the best of the four crossed cells at each $\delta$, and the shaded rows are ours.}
\label{tab:facfull}
\end{table*}
}{}
\IfFileExists{tables/table_contrasts.tex}{\begin{table*}[p]
\centering
\footnotesize
\setlength{\tabcolsep}{3pt}
\renewcommand{\arraystretch}{0.88}
\begin{tabular}{@{}p{2.8cm}p{2.2cm}crrrrr@{}}
\toprule
 & & & & \multicolumn{2}{c}{$\Delta$TPR (points)} & & \\
\cmidrule(lr){5-6}
Contrast & Held fixed & $\delta$ & $\Delta\bar z$ & @1\% & @0.1\% & $\Delta$log-PPL & $\Delta$Judge \\
\midrule
\multicolumn{8}{@{}l}{\emph{Partition}} \\
\multirow{6}{=}{Learned bands vs.\ keyed hash} & \multirow{6}{=}{Current beliefs} & 2 & \textbf{$+$0.41} & \textbf{$+$14.7} & \textbf{$+$15.5} & \textbf{$-$0.028} & $+$0.06 \\
 & & & {\scriptsize\color{black!60}[0.29, 0.54]} & {\scriptsize\color{black!60}[9.5, 19.8]} & {\scriptsize\color{black!60}[10.5, 20.5]} & {\scriptsize\color{black!60}[$-$0.055, $-$0.002]} & {\scriptsize\color{black!60}[$-$0.08, 0.20]} \\
 &  & 3 & \textbf{$+$0.47} & \textbf{$+$4.2} & \textbf{$+$11.0} & $-$0.006 & $-$0.02 \\
 & & & {\scriptsize\color{black!60}[0.33, 0.61]} & {\scriptsize\color{black!60}[0.8, 7.5]} & {\scriptsize\color{black!60}[6.7, 15.5]} & {\scriptsize\color{black!60}[$-$0.033, 0.020]} & {\scriptsize\color{black!60}[$-$0.15, 0.10]} \\
 &  & 4 & \textbf{$+$0.42} & $+$1.8 & \textbf{$+$2.8} & $-$0.025 & \textbf{$-$0.15} \\
 & & & {\scriptsize\color{black!60}[0.27, 0.56]} & {\scriptsize\color{black!60}[0.0, 3.7]} & {\scriptsize\color{black!60}[0.3, 5.3]} & {\scriptsize\color{black!60}[$-$0.054, 0.003]} & {\scriptsize\color{black!60}[$-$0.28, $-$0.02]} \\
\addlinespace[2pt]
\multirow{6}{=}{Learned bands vs.\ keyed hash} & \multirow{6}{=}{Predictor} & 2 & \textbf{$+$0.34} & \textbf{$+$10.0} & \textbf{$+$12.3} & $-$0.026 & $-$0.01 \\
 & & & {\scriptsize\color{black!60}[0.21, 0.47]} & {\scriptsize\color{black!60}[5.0, 15.0]} & {\scriptsize\color{black!60}[7.2, 17.7]} & {\scriptsize\color{black!60}[$-$0.055, 0.002]} & {\scriptsize\color{black!60}[$-$0.14, 0.12]} \\
 &  & 3 & \textbf{$+$0.37} & $+$2.8 & \textbf{$+$8.2} & $-$0.004 & 0.00 \\
 & & & {\scriptsize\color{black!60}[0.24, 0.51]} & {\scriptsize\color{black!60}[0.0, 5.7]} & {\scriptsize\color{black!60}[4.0, 12.3]} & {\scriptsize\color{black!60}[$-$0.030, 0.022]} & {\scriptsize\color{black!60}[$-$0.13, 0.12]} \\
 &  & 4 & \textbf{$+$0.52} & $+$0.5 & $+$1.2 & $-$0.020 & $-$0.07 \\
 & & & {\scriptsize\color{black!60}[0.38, 0.66]} & {\scriptsize\color{black!60}[$-$1.3, 2.2]} & {\scriptsize\color{black!60}[$-$1.0, 3.3]} & {\scriptsize\color{black!60}[$-$0.046, 0.007]} & {\scriptsize\color{black!60}[$-$0.20, 0.05]} \\
\midrule
\multicolumn{8}{@{}l}{\emph{Foresight}} \\
\multirow{6}{=}{Predictor vs.\ current beliefs} & \multirow{6}{=}{Learned bands} & 2 & \textbf{$+$0.18} & \textbf{$+$5.8} & $+$4.0 & $+$0.017 & $+$0.05 \\
 & & & {\scriptsize\color{black!60}[0.07, 0.28]} & {\scriptsize\color{black!60}[1.5, 10.2]} & {\scriptsize\color{black!60}[$-$0.7, 8.5]} & {\scriptsize\color{black!60}[$-$0.002, 0.037]} & {\scriptsize\color{black!60}[$-$0.07, 0.17]} \\
 &  & 3 & $+$0.10 & $+$0.7 & $+$2.5 & $+$0.007 & $-$0.02 \\
 & & & {\scriptsize\color{black!60}[$-$0.03, 0.21]} & {\scriptsize\color{black!60}[$-$2.0, 3.5]} & {\scriptsize\color{black!60}[$-$1.0, 6.2]} & {\scriptsize\color{black!60}[$-$0.014, 0.028]} & {\scriptsize\color{black!60}[$-$0.14, 0.09]} \\
 &  & 4 & \textbf{$+$0.28} & $-$0.2 & $-$0.2 & $-$0.013 & $+$0.08 \\
 & & & {\scriptsize\color{black!60}[0.15, 0.40]} & {\scriptsize\color{black!60}[$-$1.5, 1.2]} & {\scriptsize\color{black!60}[$-$2.2, 1.8]} & {\scriptsize\color{black!60}[$-$0.036, 0.011]} & {\scriptsize\color{black!60}[$-$0.04, 0.20]} \\
\addlinespace[2pt]
\multirow{6}{=}{Predictor vs.\ current beliefs} & \multirow{6}{=}{Keyed hash} & 2 & \textbf{$+$0.25} & \textbf{$+$10.5} & \textbf{$+$7.2} & $+$0.015 & $+$0.11 \\
 & & & {\scriptsize\color{black!60}[0.15, 0.35]} & {\scriptsize\color{black!60}[6.2, 15.0]} & {\scriptsize\color{black!60}[3.0, 11.5]} & {\scriptsize\color{black!60}[$-$0.004, 0.034]} & {\scriptsize\color{black!60}[0.00, 0.23]} \\
 &  & 3 & \textbf{$+$0.20} & $+$2.0 & \textbf{$+$5.3} & $+$0.006 & $-$0.04 \\
 & & & {\scriptsize\color{black!60}[0.09, 0.31]} & {\scriptsize\color{black!60}[$-$1.3, 4.8]} & {\scriptsize\color{black!60}[1.3, 9.7]} & {\scriptsize\color{black!60}[$-$0.018, 0.030]} & {\scriptsize\color{black!60}[$-$0.16, 0.07]} \\
 &  & 4 & \textbf{$+$0.18} & $+$1.2 & $+$1.5 & $-$0.018 & $-$0.01 \\
 & & & {\scriptsize\color{black!60}[0.06, 0.31]} & {\scriptsize\color{black!60}[$-$0.5, 2.8]} & {\scriptsize\color{black!60}[$-$0.8, 3.8]} & {\scriptsize\color{black!60}[$-$0.043, 0.005]} & {\scriptsize\color{black!60}[$-$0.11, 0.11]} \\
\addlinespace[2pt]
\multirow{6}{=}{Predictor vs.\ none ($\lambda=0$)} & \multirow{6}{=}{Learned bands} & 2 & \textbf{$+$0.88} & \textbf{$+$30.2} & \textbf{$+$28.7} & $+$0.009 & $+$0.07 \\
 & & & {\scriptsize\color{black!60}[0.78, 0.99]} & {\scriptsize\color{black!60}[25.2, 35.0]} & {\scriptsize\color{black!60}[24.2, 33.2]} & {\scriptsize\color{black!60}[$-$0.012, 0.029]} & {\scriptsize\color{black!60}[$-$0.06, 0.18]} \\
 &  & 3 & \textbf{$+$1.25} & \textbf{$+$18.7} & \textbf{$+$30.2} & \textbf{$+$0.026} & $+$0.01 \\
 & & & {\scriptsize\color{black!60}[1.13, 1.36]} & {\scriptsize\color{black!60}[15.0, 22.2]} & {\scriptsize\color{black!60}[26.2, 34.5]} & {\scriptsize\color{black!60}[0.001, 0.049]} & {\scriptsize\color{black!60}[$-$0.11, 0.13]} \\
 &  & 4 & \textbf{$+$1.82} & \textbf{$+$6.7} & \textbf{$+$16.7} & $+$0.018 & $+$0.10 \\
 & & & {\scriptsize\color{black!60}[1.69, 1.96]} & {\scriptsize\color{black!60}[4.5, 9.0]} & {\scriptsize\color{black!60}[13.5, 20.2]} & {\scriptsize\color{black!60}[$-$0.006, 0.042]} & {\scriptsize\color{black!60}[$-$0.03, 0.23]} \\
\addlinespace[2pt]
\multirow{6}{=}{Current beliefs vs.\ none ($\lambda=0$)} & \multirow{6}{=}{Learned bands} & 2 & \textbf{$+$0.71} & \textbf{$+$24.3} & \textbf{$+$24.7} & $-$0.009 & $+$0.02 \\
 & & & {\scriptsize\color{black!60}[0.60, 0.81]} & {\scriptsize\color{black!60}[19.5, 29.0]} & {\scriptsize\color{black!60}[20.3, 29.2]} & {\scriptsize\color{black!60}[$-$0.031, 0.014]} & {\scriptsize\color{black!60}[$-$0.10, 0.14]} \\
 &  & 3 & \textbf{$+$1.15} & \textbf{$+$18.0} & \textbf{$+$27.7} & $+$0.019 & $+$0.04 \\
 & & & {\scriptsize\color{black!60}[1.03, 1.27]} & {\scriptsize\color{black!60}[14.2, 21.7]} & {\scriptsize\color{black!60}[23.2, 32.2]} & {\scriptsize\color{black!60}[$-$0.003, 0.042]} & {\scriptsize\color{black!60}[$-$0.08, 0.16]} \\
 &  & 4 & \textbf{$+$1.54} & \textbf{$+$6.8} & \textbf{$+$16.8} & \textbf{$+$0.031} & $+$0.02 \\
 & & & {\scriptsize\color{black!60}[1.42, 1.67]} & {\scriptsize\color{black!60}[4.7, 9.2]} & {\scriptsize\color{black!60}[13.5, 20.2]} & {\scriptsize\color{black!60}[0.005, 0.057]} & {\scriptsize\color{black!60}[$-$0.11, 0.15]} \\
\midrule
\multicolumn{8}{@{}l}{\emph{Both components}} \\
\multirow{6}{=}{Learned bands + predictor vs.\ keyed hash + current beliefs} &  & 2 & \textbf{$+$0.59} & \textbf{$+$20.5} & \textbf{$+$19.5} & $-$0.011 & $+$0.11 \\
 & & & {\scriptsize\color{black!60}[0.46, 0.72]} & {\scriptsize\color{black!60}[15.3, 25.7]} & {\scriptsize\color{black!60}[14.5, 24.7]} & {\scriptsize\color{black!60}[$-$0.038, 0.015]} & {\scriptsize\color{black!60}[$-$0.02, 0.24]} \\
 &  & 3 & \textbf{$+$0.57} & \textbf{$+$4.8} & \textbf{$+$13.5} & $+$0.001 & $-$0.04 \\
 & & & {\scriptsize\color{black!60}[0.43, 0.70]} & {\scriptsize\color{black!60}[1.7, 8.2]} & {\scriptsize\color{black!60}[9.3, 17.7]} & {\scriptsize\color{black!60}[$-$0.023, 0.026]} & {\scriptsize\color{black!60}[$-$0.16, 0.07]} \\
 &  & 4 & \textbf{$+$0.70} & $+$1.7 & \textbf{$+$2.7} & \textbf{$-$0.038} & $-$0.07 \\
 & & & {\scriptsize\color{black!60}[0.55, 0.85]} & {\scriptsize\color{black!60}[$-$0.2, 3.7]} & {\scriptsize\color{black!60}[0.2, 5.2]} & {\scriptsize\color{black!60}[$-$0.066, $-$0.011]} & {\scriptsize\color{black!60}[$-$0.20, 0.05]} \\
\midrule
\multicolumn{8}{@{}l}{\emph{Decoding choices}} \\
\multirow{4}{=}{Unit-aware vs.\ plain score} & \multirow{4}{=}{Learned bands, predictor} & 3 & $+$0.04 & $-$0.8 & $+$1.3 & \textbf{$+$0.033} & \textbf{$+$0.35} \\
 & & & {\scriptsize\color{black!60}[$-$0.06, 0.13]} & {\scriptsize\color{black!60}[$-$2.8, 1.2]} & {\scriptsize\color{black!60}[$-$1.5, 4.3]} & {\scriptsize\color{black!60}[0.015, 0.052]} & {\scriptsize\color{black!60}[0.23, 0.48]} \\
 &  & 4 & \textbf{$+$0.16} & $+$0.2 & $+$1.2 & \textbf{$+$0.032} & \textbf{$+$0.44} \\
 & & & {\scriptsize\color{black!60}[0.06, 0.26]} & {\scriptsize\color{black!60}[$-$1.0, 1.3]} & {\scriptsize\color{black!60}[$-$0.7, 3.2]} & {\scriptsize\color{black!60}[0.011, 0.053]} & {\scriptsize\color{black!60}[0.29, 0.59]} \\
\addlinespace[2pt]
\multirow{4}{=}{Unit-aware vs.\ plain score} & \multirow{4}{=}{Learned bands, current beliefs} & 3 & $+$0.07 & $-$0.3 & $+$0.7 & \textbf{$+$0.026} & \textbf{$+$0.33} \\
 & & & {\scriptsize\color{black!60}[$-$0.03, 0.18]} & {\scriptsize\color{black!60}[$-$2.8, 2.3]} & {\scriptsize\color{black!60}[$-$3.2, 4.3]} & {\scriptsize\color{black!60}[0.007, 0.044]} & {\scriptsize\color{black!60}[0.20, 0.46]} \\
 &  & 4 & $+$0.04 & $+$0.2 & $+$1.7 & $+$0.018 & \textbf{$+$0.29} \\
 & & & {\scriptsize\color{black!60}[$-$0.07, 0.16]} & {\scriptsize\color{black!60}[$-$1.2, 1.7]} & {\scriptsize\color{black!60}[$-$0.3, 3.7]} & {\scriptsize\color{black!60}[$-$0.004, 0.039]} & {\scriptsize\color{black!60}[0.15, 0.44]} \\
\addlinespace[2pt]
\multirow{4}{=}{Unit-aware vs.\ plain score} & \multirow{4}{=}{Keyed hash, current beliefs} & 3 & \textbf{$+$0.10} & $+$1.0 & $+$1.0 & $+$0.006 & \textbf{$+$0.18} \\
 & & & {\scriptsize\color{black!60}[0.01, 0.19]} & {\scriptsize\color{black!60}[$-$1.7, 3.5]} & {\scriptsize\color{black!60}[$-$2.8, 4.8]} & {\scriptsize\color{black!60}[$-$0.012, 0.024]} & {\scriptsize\color{black!60}[0.06, 0.31]} \\
 &  & 4 & \textbf{$+$0.10} & $+$0.8 & $+$1.3 & \textbf{$+$0.029} & \textbf{$+$0.23} \\
 & & & {\scriptsize\color{black!60}[0.00, 0.19]} & {\scriptsize\color{black!60}[$-$0.3, 2.2]} & {\scriptsize\color{black!60}[$-$0.8, 3.7]} & {\scriptsize\color{black!60}[0.011, 0.046]} & {\scriptsize\color{black!60}[0.11, 0.35]} \\
\addlinespace[2pt]
\multirow{4}{=}{Consistent vs.\ current-tilt order} & \multirow{4}{=}{Learned bands, predictor, plain score} & 3 & \textbf{$-$0.17} & $-$2.0 & $-$2.3 & \textbf{$-$0.044} & \textbf{$-$0.16} \\
 & & & {\scriptsize\color{black!60}[$-$0.29, $-$0.04]} & {\scriptsize\color{black!60}[$-$4.5, 0.3]} & {\scriptsize\color{black!60}[$-$5.5, 1.0]} & {\scriptsize\color{black!60}[$-$0.065, $-$0.021]} & {\scriptsize\color{black!60}[$-$0.32, 0.00]} \\
 &  & 4 & $-$0.04 & 0.0 & $-$0.5 & \textbf{$-$0.055} & $-$0.10 \\
 & & & {\scriptsize\color{black!60}[$-$0.17, 0.08]} & {\scriptsize\color{black!60}[$-$1.5, 1.5]} & {\scriptsize\color{black!60}[$-$2.7, 1.5]} & {\scriptsize\color{black!60}[$-$0.078, $-$0.030]} & {\scriptsize\color{black!60}[$-$0.26, 0.08]} \\
\bottomrule
\end{tabular}
\caption{Paired differences behind \S\ref{sec:components} and Table~\ref{tab:decoding}, on WaterBench-600 with LLaDA-8B-Instruct in its low-confidence order. Each row gives the first configuration minus the second, with the settings under Held fixed shared by both. Brackets give paired bootstrap 95\% CIs, and bold marks the estimates whose CI excludes zero. A dash marks a difference that was not computed.}
\label{tab:contrasts}
\end{table*}
}{}
\IfFileExists{tables/table_decoding.tex}{\begin{table*}[t]
\centering
\small
\setlength{\tabcolsep}{3.4pt}
\begin{tabular}{@{}llrrrrrrrrrrrr@{}}
\toprule
 & & \multicolumn{6}{c}{$\delta=3$} & \multicolumn{6}{c}{$\delta=4$} \\
\cmidrule(lr){3-8}\cmidrule(lr){9-14}
 & & $\bar z$ & \multicolumn{2}{c}{TPR (\%) $\uparrow$} & log-PPL & Judge & Rep. & $\bar z$ & \multicolumn{2}{c}{TPR (\%) $\uparrow$} & log-PPL & Judge & Rep. \\
\cmidrule(lr){4-5}\cmidrule(lr){10-11}
Score & Order & $\uparrow$ & @1\% & @0.1\% & $\downarrow$ & $\uparrow$ & $\downarrow$ & $\uparrow$ & @1\% & @0.1\% & $\downarrow$ & $\uparrow$ & $\downarrow$ \\
\midrule
\rowcolor{black!7}Unit-aware & Consistent & 4.35 & 89.3 & 79.3 & 1.398 & 9.30 & 14.0 & 5.97 & 96.2 & 92.7 & 1.469 & 9.32 & 14.3 \\
Plain & Consistent & 4.32 & 90.2 & 78.0 & 1.365 & 8.96 & 21.0 & 5.82 & 96.0 & 91.5 & 1.437 & 8.88 & 20.8 \\
Plain & Current-tilt & 4.49 & 92.2 & 80.3 & 1.408 & 9.12 & 18.7 & 5.86 & 96.0 & 92.0 & 1.492 & 8.98 & 18.8 \\
\bottomrule
\end{tabular}
\caption{Decoding choices of our watermark with learned bands and the predictor, on WaterBench-600 with LLaDA-8B-Instruct in its low-confidence order. The plain score omits the unit-aware correction (\S\ref{sec:formulation}). The consistent order is that of \S\ref{sec:decoding}, and the current-tilt order chooses the position with the current-belief tilt and applies the predicted tilt only afterwards (Appendix~\ref{app:decoding}). Columns are as in Table~\ref{tab:fac}, with TPR@0.1\% and Rep.\ added, and the shaded row is ours.}
\label{tab:decoding}
\end{table*}
}{}
Table~\ref{tab:facfull} gives every metric of the crossed ablation, and Table~\ref{tab:contrasts} the paired differences behind \S\ref{sec:components}, including those of the two decoding choices, the unit-aware score and the consistent order. The unit-aware score raises the mean log-perplexity by about 0.03 because it reduces repetition loops, whose log-perplexity is low. On responses without loops, its effect on log-perplexity is at most 0.016 and not significant. The current-tilt order commits a token other than the draw that won the choice of position in 2.5\% of the commits. With the plain score, it raises the mean $z$-score and the judge score at $\delta=3$ by 0.17 and 0.16 relative to the consistent order, but also the log-perplexity of responses without loops (by 0.031 and 0.048 at $\delta=3$ and 4). With both choices, ours is rated 0.18 and 0.34 higher than the variant with the plain score and the current-tilt order (Table~\ref{tab:decoding}).

\paragraph{Additivity.} The interaction of the two components in mean $z$-score is the predictor's gain with learned bands minus its gain with the keyed hash, [(learned bands, predictor) $-$ (learned bands, current beliefs)] $-$ [(keyed hash, predictor) $-$ (keyed hash, current beliefs)]. It is $-0.07$, $-0.10$ and $+0.10$ at $\delta=2$, 3 and 4, none of them significant.

\begin{figure*}[t]
\centering
\includegraphics[width=\textwidth]{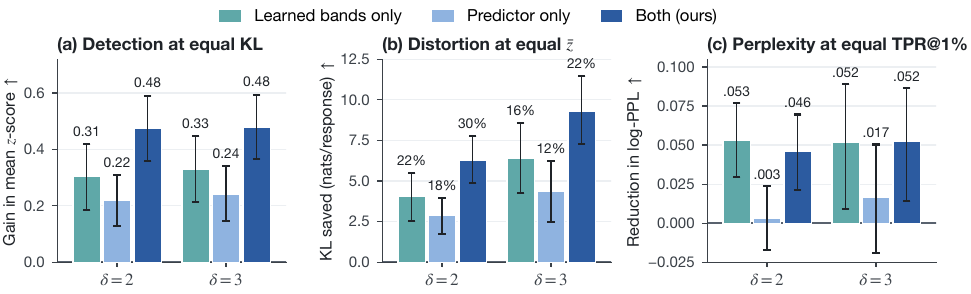}
\caption{The two learned components on the detection--quality trade-off, each against the cell with neither (keyed hash, current beliefs), on WaterBench-600 with LLaDA-8B-Instruct in its low-confidence order (unit-aware score and consistent order in every cell). (a) Gain in the mean $z$-score at equal KL divergence from the model's distribution. (b) KL saved at equal mean $z$-score, with the saving as a share of the reference cell's KL above each bar. (c) Reduction of the mean log-perplexity (Qwen3-8B) at equal TPR@1\%. The reference cell is interpolated linearly between its strengths $\delta=2$, 3 and 4, and error bars are paired bootstrap 95\% CIs.}
\label{fig:components}
\end{figure*}

\paragraph{Second judge.} Table~\ref{tab:facjudges} re-rates the crossed ablation in one run per judge with the same instructions for every cell. Neither component lowers the score of Qwen3-32B significantly at any strength, and the predictor with the keyed hash raises it at $\delta=2$ ($+$0.09). The only significant decrease under either judge is the lower Gemma-4-12B-it score of the learned partition with current-belief foresight at $\delta=4$ ($-$0.15), which Qwen3-32B puts at $-$0.08, not significantly. With the predictor, the learned partition changes the score at $\delta=4$ by $-$0.07 and 0.00, and the two components together change it by $-$0.07 to $+$0.11 under either judge, never significantly.
\IfFileExists{tables/table_facjudges.tex}{\begin{table*}[t]
\centering
\small
\setlength{\tabcolsep}{5pt}
\begin{tabular}{@{}llrrrrrr@{}}
\toprule
 & & \multicolumn{2}{c}{$\delta=2$} & \multicolumn{2}{c}{$\delta=3$} & \multicolumn{2}{c}{$\delta=4$} \\
\cmidrule(lr){3-4}\cmidrule(lr){5-6}\cmidrule(l){7-8}
Partition & Foresight & G & Q & G & Q & G & Q \\
\midrule
Keyed hash & Current beliefs & 9.18 & 8.58 & 9.35 & 8.69 & 9.40 & 8.67 \\
 & Predictor & 9.29 & 8.67 & 9.31 & 8.64 & 9.39 & 8.63 \\
Learned bands & Current beliefs & 9.24 & 8.60 & 9.33 & 8.69 & 9.24 & 8.59 \\
\rowcolor{black!6} & Predictor & 9.29 & 8.64 & 9.31 & 8.67 & 9.32 & 8.63 \\
\midrule
Learned bands $-$ keyed hash & current beliefs & $+$0.06 & $+$0.01 & $-$0.02 & 0.00 & \textbf{$-$0.15} & $-$0.08 \\
 & predictor & $-$0.01 & $-$0.02 & 0.00 & $+$0.03 & $-$0.07 & 0.00 \\
Predictor $-$ current beliefs & learned bands & $+$0.05 & $+$0.05 & $-$0.02 & $-$0.02 & $+$0.08 & $+$0.03 \\
 & keyed hash & $+$0.11 & \textbf{$+$0.09} & $-$0.04 & $-$0.05 & $-$0.01 & $-$0.05 \\
Both $-$ neither &  & $+$0.11 & $+$0.06 & $-$0.04 & $-$0.02 & $-$0.07 & $-$0.04 \\
\bottomrule
\end{tabular}
\caption{Judge scores of the crossed ablation of Table~\ref{tab:fac} (WaterBench-600, LLaDA-8B-Instruct, low-confidence order), re-rated in one run per judge with the same instruction text for every cell: Gemma-4-12B-it (G) and Qwen3-32B with FP8 weights (Q), both greedy, mean of style and consistency (1--10). The bottom rows give paired differences, with the setting after the label held fixed, and bold marks those whose paired bootstrap 95\% CI excludes zero. On these 7{,}200 responses, the two judges correlate at 0.78. Gemma scores can differ by 0.01 from Table~\ref{tab:fac} through rounding and, for some cells, another run.}
\label{tab:facjudges}
\end{table*}
}{}

\paragraph{Without the unit-aware score.} An earlier configuration of our sampler used the plain score and the current-tilt order. In the same crossed ablation with that configuration, the learned partition raised the mean $z$-score by 0.50 and 0.47 at $\delta=3$ and 4 with current-belief foresight, the predictor by 0.31 and 0.21 with learned bands, and the two together by 0.80 and 0.68, all significantly. The learned partition also lowered the judge score, by 0.17 and 0.22. The detection gains of the two components therefore do not depend on the unit-aware score.

\paragraph{Exact foresight.} Re-running the generator for each of the top-8 candidates at the commits where the predictor is used raises the mean $z$-score over the predictor by 0.15, 0.31 and 0.14 at $\delta=2$, 3 and 4 (all significant), at the same $\delta$ and about the same KL (Table~\ref{tab:exact}).
\begin{table}[h]
\centering
\small
\setlength{\tabcolsep}{4pt}
\begin{tabular}{@{}lcrrr@{}}
\toprule
 & & & \multicolumn{2}{c}{TPR (\%)} \\
\cmidrule(l){4-5}
Foresight & $\delta$ & $\bar z$ & @1\% & @0.1\% \\
\midrule
Predictor & 2 & 2.95 & 69.5 & 44.0 \\
Exact & 2 & 3.10 & 73.2 & 47.7 \\
\addlinespace[2pt]
Predictor & 3 & 4.35 & 89.3 & 79.3 \\
Exact & 3 & 4.66 & 93.3 & 83.8 \\
\addlinespace[2pt]
Predictor & 4 & 5.97 & 96.2 & 92.7 \\
Exact & 4 & 6.11 & 96.8 & 93.5 \\
\bottomrule
\end{tabular}
\caption{Predicted against exact foresight (learned partition, WaterBench-600, low-confidence order). TPR@1\% and TPR@0.1\% use calibrated thresholds.}
\label{tab:exact}
\end{table}

\paragraph{Foresight at equal KL.} Relative to the cell without foresight ($\lambda=0$), predicted foresight adds 0.94 to the mean $z$-score at equal KL at $\delta=3$, and current-belief foresight 0.80. At $\delta=4$, both use more KL than that cell at its strongest setting, and extrapolating it linearly gives 1.29 and 0.92.

\paragraph{Equal log-perplexity.} At equal mean log-perplexity instead of equal KL, the predictor's gain in mean $z$-score (learned bands) is significant at $\delta=4$ and 5 (0.49 and 0.37) but not at $\delta=2$ and 3. The log-perplexity of single runs varies by a few hundredths, which makes this comparison less precise than the one at equal KL. The same interpolation within the crossed ablation places both components on a better detection--quality frontier (Figure~\ref{fig:components}). At equal mean $z$-score, the learned partition, the predictor and the two together need 22, 18 and 30\% less KL than the cell with neither at $\delta=2$ (16, 12 and 22\% at $\delta=3$), all significantly. At equal TPR@1\%, the two together lower the mean log-perplexity by 0.05 at $\delta=2$ and 3, and the learned partition alone by 0.05 at both strengths, all significantly, whereas the predictor leaves it unchanged. At equal log-perplexity, the two together detect 4.7 and 5.1 points more at $\delta=3$ and 4, and at equal KL they raise the mean $z$-score by 0.48 at $\delta=2$ and 3, all significantly.

\paragraph{Random bands.} To separate learning from fixing each token's band, we also decode WaterBench-600 at $\delta=3$ with current-belief foresight and random bands, which assign each token to a band by a fixed hash of its id, without learning (thresholds calibrated with this partition). This run reaches a mean $z$-score of 3.86, against 3.78 for the keyed hash ($+$0.08, not significant) and 4.25 for the learned partition. Learning thus adds 0.39 to the mean $z$-score and 6.7 points to TPR@0.1\%, both significantly, with no significant change in log-perplexity ($-$0.015).

\paragraph{Further keys.} Table~\ref{tab:kabl} repeats the crossed ablation at $\delta=3$ with four further generation keys (keys 1--4) on the 300 prompts of the multi-key comparison. Averaged over the keys, the learned partition raises the mean $z$-score by 0.33 with current-belief foresight and by 0.35 with the predictor, and the predictor raises it by 0.15 with the keyed hash and by 0.17 with learned bands, all significantly. Together, the two components raise the mean $z$-score by 0.50, with no significant change in log-perplexity. Their interaction is 0.02 and not significant. The gain of the learned partition varies more across keys (0.11 to 0.62 with current-belief foresight) than that of the predictor (0.12 to 0.22).
\IfFileExists{tables/table_kabl.tex}{\begin{table*}[t]
\centering
\small
\setlength{\tabcolsep}{4pt}
\begin{tabular}{@{}llrrrrrr@{}}
\toprule
Partition & Foresight & $\bar z$ & TPR@1\% (\%) & log-PPL & & & \\
\midrule
Keyed hash & Current beliefs & 3.83 & 84.4 & 1.387 &  &  &  \\
Keyed hash & Predictor & 3.98 & 86.9 & 1.389 &  &  &  \\
Learned bands & Current beliefs & 4.16 & 88.2 & 1.399 &  &  &  \\
Learned bands & Predictor & 4.33 & 89.2 & 1.399 &  &  &  \\
\midrule
 & & \multicolumn{4}{c}{$\Delta\bar z$ per key} & \multicolumn{2}{c}{Mean over keys} \\
\cmidrule(lr){3-6}\cmidrule(lr){7-8}
\multicolumn{2}{@{}l}{Contrast} & Key 1 & Key 2 & Key 3 & Key 4 & $\Delta\bar z$ & $\Delta$log-PPL \\
\midrule
\multicolumn{8}{@{}l}{Learned bands $-$ keyed hash} \\
\multicolumn{2}{@{}l}{\quad Current beliefs} & \textbf{$+$0.62} & $+$0.11 & \textbf{$+$0.42} & $+$0.18 & \textbf{$+$0.33} {\footnotesize\color{black!60}[0.23, 0.43]} & $+$0.012 {\footnotesize\color{black!60}[$-$0.008, 0.032]} \\
\multicolumn{2}{@{}l}{\quad Predictor} & \textbf{$+$0.78} & $+$0.12 & \textbf{$+$0.47} & $+$0.05 & \textbf{$+$0.35} {\footnotesize\color{black!60}[0.26, 0.45]} & $+$0.010 {\footnotesize\color{black!60}[$-$0.007, 0.028]} \\
\multicolumn{8}{@{}l}{Predictor $-$ current beliefs} \\
\multicolumn{2}{@{}l}{\quad Keyed hash} & $+$0.03 & $+$0.15 & \textbf{$+$0.17} & \textbf{$+$0.25} & \textbf{$+$0.15} {\footnotesize\color{black!60}[0.07, 0.23]} & $+$0.002 {\footnotesize\color{black!60}[$-$0.014, 0.017]} \\
\multicolumn{2}{@{}l}{\quad Learned bands} & \textbf{$+$0.18} & \textbf{$+$0.16} & \textbf{$+$0.22} & $+$0.12 & \textbf{$+$0.17} {\footnotesize\color{black!60}[0.09, 0.25]} & 0.000 {\footnotesize\color{black!60}[$-$0.017, 0.016]} \\
\multicolumn{2}{@{}l}{Both $-$ neither} & \textbf{$+$0.81} & \textbf{$+$0.27} & \textbf{$+$0.63} & \textbf{$+$0.30} & \textbf{$+$0.50} {\footnotesize\color{black!60}[0.41, 0.60]} & $+$0.012 {\footnotesize\color{black!60}[$-$0.007, 0.031]} \\
\multicolumn{2}{@{}l}{Interaction} & $+$0.15 & $+$0.02 & $+$0.05 & $-$0.13 & $+$0.02 {\footnotesize\color{black!60}[$-$0.09, 0.13]} & $-$0.002 {\footnotesize\color{black!60}[$-$0.024, 0.020]} \\
\bottomrule
\end{tabular}
\caption{The crossed ablation of Table~\ref{tab:fac} under four further generation keys (keys 1, 2, 3 and 4) at $\delta=3$, on the same 300 of the WaterBench-600 prompts for every key, with LLaDA-8B-Instruct in its low-confidence order. The top rows average each cell over the keys, and their TPRs are nominal. The bottom rows give paired differences in $\bar z$ per key and, averaged over the keys, in $\bar z$ and log-PPL with paired bootstrap 95\% CIs. Indented rows give the level of the other component. Both $-$ neither compares learned bands with the predictor against the keyed hash with current beliefs, and the interaction is the predictor's gain with learned bands minus its gain with the keyed hash. Bold marks a difference whose 95\% CI excludes zero, for one key or over the keys.}
\label{tab:kabl}
\end{table*}
}{}

\section{Design ablations}
\label{app:ablations}
This appendix compares the context width, the number of bands, the foresight weight and the predictor gate, and measures what foresight beyond the two dependents could add.
\IfFileExists{tables/table_ctxh.tex}{\begin{table*}[t]
\centering
\small
\setlength{\tabcolsep}{3pt}
\begin{tabular}{@{}crrrrrrrrr@{}}
\toprule
 & & & \multicolumn{7}{c}{TPR@1\% (\%)} \\
\cmidrule(lr){4-10}
 & & & No attack & \multicolumn{2}{c}{Deletion} & \multicolumn{2}{c}{Insertion} & \multicolumn{2}{c}{Substitution} \\
\cmidrule(lr){5-6}\cmidrule(lr){7-8}\cmidrule(lr){9-10}
$h$ & $\bar z$ & log-PPL & & 10\% & 20\% & 10\% & 20\% & 10\% & 20\% \\
\midrule
1 & 3.40 & 1.518 & 77.3 & \textbf{65.3} & \textbf{53.3} & 64.0 & \textbf{56.7} & 56.7 & \textbf{47.3} \\
\rowcolor{black!7}2 & \textbf{3.78} & \textbf{1.494} & 81.3 & 64.7 & 44.7 & \textbf{66.7} & 51.3 & \textbf{64.0} & 31.3 \\
3 & 3.52 & 1.588 & 74.7 & 50.7 & 23.3 & 56.0 & 30.7 & 45.3 & 19.3 \\
4 & 3.66 & 1.601 & \textbf{84.0} & 46.0 & 12.0 & 50.0 & 28.7 & 44.0 & 12.0 \\
5 & 3.42 & 1.625 & 79.3 & 35.3 & 9.3 & 36.0 & 10.7 & 30.7 & 6.0 \\
6 & 3.37 & 1.606 & 71.3 & 27.3 & 8.7 & 33.3 & 14.0 & 22.0 & 5.3 \\
\bottomrule
\end{tabular}
\caption{Context width $h$ of learned bands at $\delta=4$, on the 150 development prompts (Table~\ref{tab:splits}) with LLaDA-8B-Instruct in its low-confidence order. All rows use the plain score and no foresight ($\lambda=0$). TPR@1\% is nominal, without an attack and after deleting, inserting or substituting 10 or 20\% of the tokens at random. The shaded row is the width we use ($h=2$), and bold marks the best value in each column.}
\label{tab:ctxh}
\end{table*}
}{}
\paragraph{Context width.} Learned bands work with any context width. For an $h$-token context $c_i=(x_{i-h},\dots,x_{i-1})$, we hash the first $h-1$ tokens into one integer and keep $x_{i-1}$ separately, so $h=2$ recovers our context, and Proposition~\ref{prop:null} holds for every $h$. Foresight for $h>2$ would have to consider $h$ dependents with $h-1$ other context positions each, so this comparison uses no foresight ($\lambda=0$). Table~\ref{tab:ctxh} gives one run of the 150 development prompts per width at $\delta=4$ with the same seeds. Two-token contexts give the highest mean $z$-score and the lowest log-perplexity. Wider contexts are much less robust, because an edited token changes the units of $h+1$ positions \citep[cf.][]{kirchenbauer2024reliability}, and one-token contexts are the most robust but detect less without edits. These robustness rates are nominal TPR@1\% without foresight, so they are not comparable with Table~\ref{tab:robust}.

\begin{table}[t]
\centering
\small
\setlength{\tabcolsep}{4.5pt}
\begin{tabular}{@{}rrrrrr@{}}
\toprule
 & \multicolumn{2}{c}{Per-token $z$ at $e^4$} & KL at $e^4$ & \multicolumn{2}{c}{At the KL of $B=4$} \\
\cmidrule(lr){2-3}\cmidrule(lr){5-6}
$B$ & Learned & Hash & (nats) & Per-token $z$ & $\beta$ \\
\midrule
2 & 0.408 & 0.377 & 0.358 & 0.448 & 4.66 \\
3 & 0.481 & 0.453 & 0.431 & 0.488 & 4.07 \\
\rowcolor{black!7}4 & 0.515 & 0.480 & 0.444 & 0.515 & 4.00 \\
6 & 0.531 & 0.493 & 0.418 & 0.548 & 4.12 \\
8 & 0.527 & 0.487 & 0.381 & 0.573 & 4.30 \\
12 & 0.501 & 0.459 & 0.315 & 0.608 & 4.64 \\
16 & 0.471 & 0.431 & 0.266 & 0.635 & 4.94 \\
\bottomrule
\end{tabular}
\caption{Offline comparison of the number of bands $B$ ($\gamma=1/B$) on the 10{,}277 held-out commit records from unwatermarked decoding of LLaDA-8B-Instruct in random order (Appendix~\ref{app:partition}). The per-token $z$ is $(g-\gamma)/\sqrt{\gamma(1-\gamma)}$, where $g$ is the expected probability, over a uniformly random key, that the committed token is green after a tilt of odds $e^{\beta}$ toward the green band. Learned bands are optimized at $\beta=4$, the Hash column uses a keyed hash, and KL is the tilt's KL divergence per token. The last two columns use, for each $B$, the $\beta$ at which learned bands have the KL of $B=4$. The shaded row is the $B$ we use.}
\label{tab:bands}
\end{table}

\paragraph{Number of bands.} In an offline measure on held-out records, the $z$-score gained per counted token at a fixed distortion grows with the number of bands (Table~\ref{tab:bands}). On 440 evaluation prompts decoded with four and with eight bands (plain score, foresight from the current beliefs, $\delta=3$ and 4, the same seeds), eight bands cost less KL at the same $\delta$ (46.3 against 64.3 nats per response at $\delta=4$). At equal KL, they reach a mean $z$-score 0.17 higher than four bands (significantly), but at equal mean log-perplexity 0.19 lower (not significantly). We keep $B=4$, the value of \citet{gloaguen2026watermarking}.

\begin{table}[t]
\centering
\small
\setlength{\tabcolsep}{3.1pt}
\begin{tabular}{@{}llrrrrr@{}}
\toprule
 & & $\bar z$ & \multicolumn{2}{c}{TPR (\%) $\uparrow$} & KL & log-PPL \\
\cmidrule(lr){4-5}
$\lambda$ & Foresight & $\uparrow$ & @1\% & @0.1\% & $\downarrow$ & $\downarrow$ \\
\midrule
$\frac14$ & Current beliefs & 3.75 & 83.3 & 60.7 & 28.9 & 1.365 \\
 & Predictor & 3.80 & 88.7 & 65.3 & 28.9 & 1.368 \\
\midrule
$\frac12$ & Current beliefs & 4.16 & 89.3 & 76.0 & 34.0 & 1.399 \\
\rowcolor{black!7} & Predictor & 4.42 & 90.0 & 82.0 & 34.9 & 1.363 \\
 & \quad gate 0 & 4.35 & 92.7 & 84.0 & 34.7 & 1.381 \\
\midrule
1 & Current beliefs & 5.47 & 96.7 & 94.7 & 57.3 & 1.487 \\
 & Predictor & 5.84 & 95.3 & 92.0 & 56.8 & 1.468 \\
 & \quad gate 0 & 5.96 & 96.7 & 93.3 & 58.6 & 1.466 \\
\bottomrule
\end{tabular}
\caption{Foresight weight $\lambda$ and source of foresight at $\delta=3$, on 150 of the WaterBench-600 prompts (learned bands, one run with the same seeds). TPRs use calibrated thresholds, and KL is the tilt's KL divergence summed over a response's commits. The predictor is used where the second most likely tilted candidate has probability at least 0.02, or at every position tilt in the rows marked gate 0. The shaded row is ours.}
\label{tab:sweep}
\end{table}

\paragraph{Foresight weight and predictor gate.} On development prompts at $\delta=4$ (one run, the same seeds), $\lambda=0.25$, 0.5 and 1 give a mean $z$-score of 5.01, 5.93 and 7.80 at a mean log-perplexity of 1.475, 1.477 and 1.653. From 0.25 to 0.5 the score rises at an unchanged log-perplexity, whereas beyond $\frac12$ the weight mainly sets the strength (from 0.5 to 1, the log-perplexity rises by 0.18 and the KL by 40 nats per response). Table~\ref{tab:sweep} crosses the weight with the source of foresight at $\delta=3$ on 150 WaterBench prompts. The predictor's gain in mean $z$-score over current beliefs grows with the weight, from 0.05 at $\lambda=\frac14$ (not significant) to 0.26 at $\frac12$ and 0.37 at 1, at about the same KL. On these prompts, its gain at $\lambda=\frac12$ is larger than on all 600 prompts (Table~\ref{tab:fac}). Using the predictor at every position tilt (gate 0) changes the mean $z$-score by $-$0.07 and 0.12 at $\lambda=\frac12$ and 1, neither significantly. Using it only at decision states (gate 0.1) did not change it significantly at $\delta=4$ either. The gate of 0.02 avoids most predictor calls.

\paragraph{Foresight beyond the dependents.} With a two-token context, a candidate enters exactly three units, the unit at $j$ and those at its two dependents. Placing it also changes the beliefs about farther positions, so farther units depend on it through the model alone. We measured how much this matters on 3{,}073 held-out decision states from a second recording of the predictor's training decodings, which stores the exact post-placement beliefs at offsets $-6$ to 6 for each of the top-8 candidates, with our partition and key. Across candidates, the probability-weighted standard deviation of a unit's expected greenness is 0.20 at $j$, 0.18 and 0.16 at the two dependents, 0.08, 0.06, 0.05 and 0.04 at offsets 3 to 6, and at most 0.05 to the left of $j$. Accordingly, extending the score from the two dependents to offsets 1 to 6 (with the same $\lambda=\frac12$) raises the expected evidence of the exact tilt by 12\% at $\delta=3$ and 11\% at $\delta=4$, but also its KL by 7\% and 6\%, so the evidence per unit of KL grows by only about 4\%. Adding the two dependents to the own term, in contrast, raises it by about 30\%. A longer horizon is thus a possible but small improvement, and it would require predicting the beliefs at farther positions.

\section{Code, question answering and Dream}
\label{app:codeqa}
This appendix details the code and question-answering evaluation and the experiments on Dream.
\IfFileExists{tables/table_codeqa.tex}{\begin{table*}[t]
\centering
\small
\setlength{\tabcolsep}{3pt}
\begin{tabular}{lrrrrrrrr}
\toprule
 & \multicolumn{2}{c}{HumanEval} & \multicolumn{2}{c}{MBPP} & \multicolumn{2}{c}{GPQA-Diamond} & \multicolumn{2}{c}{MMLU-Pro} \\
\cmidrule(lr){2-3}\cmidrule(lr){4-5}\cmidrule(lr){6-7}\cmidrule(lr){8-9}
Method & Score & TPR@1\% (\%) & Score & TPR@1\% (\%) & Score & TPR@1\% (\%) & Score & TPR@1\% (\%) \\
\midrule
No watermark & 35.4 & -- & 37.0 & -- & 20.7 & -- & 40.0 & -- \\
DLM-KGW & 34.8 & \textbf{17.7} & 32.5 & 15.0 & \textbf{26.8} & \textbf{72.2} & 35.0 & 60.0 \\
dgMARK, top-3 & 35.4 & 6.1 & 33.5 & 3.5 & 19.2 & 36.4 & \textbf{42.1} & 27.1 \\
\Method{} & \textbf{39.6} & 12.8 & \textbf{38.0} & \textbf{25.0} & 23.7 & 69.2 & 32.1 & \textbf{62.9} \\
\bottomrule
\end{tabular}
\caption{Code and question answering per task with LLaDA-8B-Instruct in its low-confidence order at $\delta=4$, with our components learned on WaterBench-style prompts, without code or QA training prompts. Score is pass@1 for code and accuracy for question answering, both in \%. TPR@1\% uses thresholds calibrated on the unwatermarked responses to the same prompts. Bold marks the best watermark in each column without a significance test.}
\label{tab:codeqa}
\end{table*}
}{}
\paragraph{Protocol.} With LLaDA-8B-Instruct, zero-shot, we decode all 164 HumanEval problems \citep{chen2021humaneval}, a fixed random subset of 200 of the 500 MBPP test problems \citep{austin2021mbpp}, all 198 GPQA-Diamond questions \citep{rein2024gpqa} with shuffled choices, and 140 MMLU-Pro test questions \citep{wang2024mmlupro}, ten per category with a fixed seed. The two QA sets use chain-of-thought prompts. Code is executed against the tests of each problem (6\,s time limit). For the QA sets we parse the answer letter with the MMLU-Pro parser. Thresholds are set on the unwatermarked responses to the same prompts.

\paragraph{Components learned on code and QA.} We learn the partition on unwatermarked decodings of 1{,}791 code and QA training prompts and fine-tune the predictor further on code and QA decodings. The training prompts are MBPP and MMLU-Pro problems that are not evaluated, MMLU-Pro questions reduced to four choices in the GPQA format, and GSM8K problems \citep{cobbe2021gsm8k}. HumanEval and GPQA-Diamond have no training prompts (the non-Diamond GPQA questions were not available to us), so they are out of distribution for these components.

\paragraph{Results.} Table~\ref{tab:transfer_tasks} gives the results of the components learned on code and QA, and Table~\ref{tab:codeqa} those of the components learned on WaterBench-style prompts, for which all four tasks are out of distribution. With 140--200 prompts per task, most differences per task are within the sampling noise. The exception is MBPP, where ours detects more responses than DLM-KGW with either set of components (7.5 and 10.0 points, both significant). Learning on code and QA raises our detection rate on HumanEval, MMLU-Pro and GPQA-Diamond by 4.9, 2.1 and 3.0 points, lowers it on MBPP by 2.5 points, and halves our drop in MMLU-Pro accuracy (from 7.9 to 4.3 points below unwatermarked responses, neither significant). On GPQA-Diamond, the unwatermarked accuracy (20.7\%) is below the 25\% of random guessing, so differences there mostly reflect noise.

\paragraph{Dream.} We decode the 600 WaterBench prompts with Dream-v0-Instruct-7B \citep{ye2025dream} and its official decoding (random order, 256 tokens in one block of 256 steps, temperature 0.5), as \citet{gloaguen2026watermarking} do, and apply their validity rule (at least 192 characters and at most half of the bigrams repeated) to every method. All our Dream runs use the plain score (the unit-aware score is implemented for LLaDA only) and tilt special tokens as red. Besides the components learned for Dream, we evaluate the partition learned for LLaDA, applied through the tokens whose strings are identical in the two vocabularies (the detector skips positions whose token or context has no match), with foresight from the current beliefs.

\paragraph{Components learned for Dream.} We learn the Dream partition as in Appendix~\ref{app:partition}, on unwatermarked Dream decodings of the 1{,}200 training prompts. On held-out records, its average gain at $\beta=4$ is 0.450, and the two most likely candidates share a band in 16\% of the commits, against 0.419 and 25\% for a keyed hash. The Dream predictor is lighter than that of LLaDA. A four-layer transformer of width 1{,}024 reads Dream's final hidden states in the 13-position window, with the candidate's input embedding and log-probability added at $j$. Its output is added to the final states of the neighbors and read through Dream's output layer, so that before training it returns the current beliefs. We distill it for six epochs with the loss of Appendix~\ref{app:learned} and keep the epoch with the lowest held-out loss. The targets are the exact post-placement beliefs recorded while decoding 860 training prompts at $\delta=3$ and 4 (1{,}720 decodings), at the commits whose second most likely tilted candidate has probability at least 0.05. These decodings use a preliminary partition rather than the final one, which only changes which states are recorded, since the predictor's inputs and targets do not depend on the partition (Appendix~\ref{app:learned}). On the held-out prompts (5\%), the most likely token it predicts for the neighbors $j{-}1$, $j{+}1$ and $j{+}2$ matches the exact one in 50\% of the cases, against 33\% for the current beliefs. As for LLaDA, it is used where the second candidate has probability at least 0.02.

\IfFileExists{tables/table_dream.tex}{\begin{table*}[t]
\centering
\small
\setlength{\tabcolsep}{4.5pt}
\begin{tabular}{@{}lcrrrrr@{}}
\toprule
 & & & \multicolumn{2}{c}{TPR (\%)} & & \\
\cmidrule(lr){4-5}
Method & $\delta$ & Valid (\%) & @1\% & @0.1\% & $\bar z$ & log-PPL \\
\midrule
No watermark & -- & 62.8 & 0.2 & 0.0 & -- & 2.188 \\
\midrule
DLM-KGW & 3 & 53.8 & 42.0 & 28.5 & -- & 2.505 \\
Ours, LLaDA partition & 3 & 78.2 & 57.3 & 39.7 & 2.75 & 2.290 \\
Ours, Dream partition & 3 & 79.3 & 55.5 & 38.2 & 2.68 & 2.283 \\
Ours, Dream partition + predictor & 3 & \textbf{81.3} & \textbf{60.0} & \textbf{43.8} & \textbf{2.89} & \textbf{2.278} \\
\midrule
DLM-KGW & 4 & 54.3 & 56.5 & 43.3 & -- & 2.631 \\
Ours, LLaDA partition & 4 & 80.3 & 75.5 & 62.5 & 3.85 & 2.353 \\
Ours, Dream partition & 4 & 83.2 & 76.2 & 65.0 & 4.00 & 2.335 \\
Ours, Dream partition + predictor & 4 & \textbf{84.2} & \textbf{78.3} & \textbf{67.7} & \textbf{4.15} & \textbf{2.316} \\
\bottomrule
\end{tabular}
\caption{Dream-v0-Instruct-7B on WaterBench-600 with its official decoding (random order, 256 steps), where ours uses the plain score. The No watermark row comes from Dream's official code. The LLaDA partition is transferred through identical tokens, the Dream partition is learned on Dream's decodings, and foresight uses the current beliefs except in the + predictor rows, which use the Dream predictor. Valid is the share of responses that pass the validity rule of DLM-KGW (at least 192 characters, at most half of the bigrams repeated). TPRs are nominal over all 600 responses, $\bar z$ uses our detector, and log-PPL is over the responses that pass the rule. Bold marks the best value in each column at the same $\delta$.}
\label{tab:dream}
\end{table*}
}{}
\paragraph{Results on Dream.} Table~\ref{tab:dream} gives the results. With both components learned for Dream, our watermark reaches a nominal TPR@1\% of 60.0 and 78.3\% over all responses at $\delta=3$ and 4, against 42.0 and 56.5\% for DLM-KGW (both differences significant). Most of this difference comes from the validity of the responses, and most invalid responses are too short. Responses shorter than 192 characters make up 34\% of the official unwatermarked responses, 44\% of those of DLM-KGW at $\delta=4$ and 13\% of ours, because tilting special tokens as red makes ours end fewer responses early. The Dream predictor raises the mean $z$-score over current-belief foresight by 0.22 and 0.15 (both significant) and the nominal TPR@1\% by 4.5 and 2.2 points (the latter not significant). The partition learned for Dream detects about as many responses as the one transferred from LLaDA (differences of $-1.8$ and 0.7 points, neither significant).

\section{Out-of-distribution writing tasks}
\label{app:ood}
\IfFileExists{tables/table_ood.tex}{\begin{table*}[t]
\centering
\small
\setlength{\tabcolsep}{4.5pt}
\begin{tabular}{@{}lcrrrrrrr@{}}
\toprule
 & & \multicolumn{4}{c}{TPR@1\% (\%)} & TPR@0.1\% & & \\
\cmidrule(lr){3-6}
Method & $\delta$ & C4 & Dolly & WP & All & All (\%) & log-PPL & Rep. \\
\midrule
No watermark & -- & 0.3 & 0.7 & 1.0 & 0.7 & 0.0 & 1.523 & 11.7 \\
dgMARK, greedy & -- & 63.0 & 87.3 & 73.3 & 74.6 & 63.7 & 2.187 & 10.9 \\
dgMARK, top-3 & -- & 63.7 & 87.7 & 74.7 & 75.3 & 63.7 & 2.265 & 12.8 \\
\midrule
DLM-KGW & 2 & 49.7 & 60.0 & 64.0 & 57.9 & 31.4 & 1.498 & 13.9 \\
\Method{} & 2 & \textbf{56.7} & \textbf{64.7} & \textbf{70.3} & \textbf{63.9} & \textbf{42.4} & \textbf{1.488} & \textbf{11.7} \\
\midrule
DLM-KGW & 3 & \textbf{82.0} & 85.0 & \textbf{88.7} & \textbf{85.2} & 73.8 & \textbf{1.525} & 18.7 \\
\Method{} & 3 & 80.7 & \textbf{88.3} & 85.3 & 84.8 & \textbf{75.0} & 1.538 & \textbf{10.4} \\
\midrule
DLM-KGW & 4 & \textbf{92.7} & \textbf{96.7} & \textbf{96.3} & \textbf{95.2} & \textbf{88.9} & \textbf{1.566} & 25.9 \\
\Method{} & 4 & 92.0 & 93.7 & 89.0 & 91.6 & 86.9 & 1.627 & \textbf{10.4} \\
\bottomrule
\end{tabular}
\caption{Results on 900 out-of-distribution prompts from other writing tasks with LLaDA-8B-Instruct in its low-confidence order, with our components learned on WaterBench-style prompts and not retrained. The prompts are 300 C4 RealNewsLike news continuations, 300 Dolly creative-writing instructions and 300 WritingPrompts (WP). Thresholds are calibrated on the official unwatermarked responses to the same prompts, and for dgMARK on its own unwatermarked responses. Bold marks the better of DLM-KGW and ours at the same $\delta$.}
\label{tab:ood}
\end{table*}
}{}
We take 300 prompts from each of three writing tasks on which nothing was learned, so these prompts are out of distribution for both components. The news prompts are the first 300 documents of the C4 RealNewsLike validation file \citep{raffel2020t5}, cut at a word boundary to 300 characters as in the protocol of dgMARK. The other two sets are the first 300 creative-writing instructions of Dolly \citep{conover2023dolly} and the first 300 prompts of the WritingPrompts test split \citep{fan2018hierarchical}, without their tags (such as [WP]). Each prompt is the user turn of LLaDA-8B-Instruct's chat template and is decoded as in the main results (low-confidence order, 256 tokens), with the partition and the predictor learned on WaterBench-style prompts. DLM-KGW \citep{gloaguen2026watermarking} and dgMARK run with their official code. Thresholds are calibrated as for WaterBench-600, on the official unwatermarked responses to the same prompts (dgMARK: on its own unwatermarked top-3 responses).

Table~\ref{tab:ood} gives the results. Over all 900 prompts, the difference between ours and DLM-KGW in TPR@1\% is 6.0 points at $\delta=2$, $-$0.4 at $\delta=3$ and $-$3.7 at $\delta=4$ (the first and last significant). In TPR@0.1\%, it is 11.0 (significant), 1.2 and $-$2.0 points. The difference in log-perplexity is $-$0.010, 0.012 and 0.061 (the last significant). The share of their responses with a repetition loop grows from 14 to 26\% with $\delta$, while ours stays between 10 and 12\%. Both versions of dgMARK detect fewer responses than ours at $\delta=4$ (by 17.0 and 16.2 points in TPR@1\%) at a far higher log-perplexity (2.19 and 2.27 against 1.63).

We attribute the smaller gain at high strength to this distribution shift. Both components are learned from decodings of WaterBench-style prompts, that is, questions and instructions, whereas news continuations and story prompts lead the generator to other candidates, which the partition was not fitted to separate and whose post-placement beliefs the predictor was not trained on. At $\delta=2$, where the tilt is weak and each separated candidate counts most, ours still detects more. Because both components are learned from unwatermarked decodings of unlabeled prompts (Appendices~\ref{app:partition} and~\ref{app:learned}), covering further genres only requires adding such prompts to their training data, without labels or human annotation.

\section{Robustness}
\label{app:robust}
\begin{table*}[t]
\centering
\small
\setlength{\tabcolsep}{4.2pt}
\begin{tabular}{lrrrrrrrrrr}
\toprule
 & \multicolumn{2}{c}{No attack} & \multicolumn{2}{c}{Deletion} & \multicolumn{2}{c}{Insertion} & \multicolumn{2}{c}{Substitution} & \multicolumn{2}{c}{Paraphrase} \\
\cmidrule(lr){2-3}\cmidrule(lr){4-5}\cmidrule(lr){6-7}\cmidrule(lr){8-9}\cmidrule(lr){10-11}
Method & TPR & AUC & TPR & AUC & TPR & AUC & TPR & AUC & TPR & AUC \\
\midrule
DLM-KGW, random & 98.1 & \textbf{99.8} & \textbf{91.5} & 99.2 & \textbf{91.1} & \textbf{99.5} & \textbf{85.9} & \textbf{99.2} & \textbf{8.5} & \textbf{74.9} \\
DLM-KGW, low-confidence & 96.3 & \textbf{99.8} & 86.7 & \textbf{99.5} & 84.8 & 98.9 & 76.7 & 98.1 & \textbf{8.5} & 71.2 \\
dgMARK, global & 91.1 & 96.7 & 5.9 & 51.1 & 3.3 & 53.1 & 37.4 & 66.4 & 2.2 & 52.4 \\
dgMARK, window & 51.5 & 92.7 & 11.1 & 74.0 & 13.7 & 78.5 & 27.0 & 86.2 & 3.3 & 57.8 \\
\Method{} & \textbf{98.5} & \textbf{99.8} & 84.4 & 98.4 & 85.2 & 98.5 & 70.0 & 96.2 & 4.1 & 67.3 \\
\bottomrule
\end{tabular}
\caption{Robustness on 90 of the WaterBench-600 prompts with 3 seeds each, giving 270 watermarked texts of LLaDA-8B-Instruct per method, with $\delta=4$ for ours and DLM-KGW. Token-level attacks delete, insert or substitute 20\% of the tokens at random, and paraphrasing uses Qwen2.5-32B-Instruct with the dgMARK prompt at temperature 0.2. TPR (\%) is the TPR@1\% with thresholds calibrated on 810 unwatermarked texts that receive the same attack, and AUC (\%) is the area under the ROC curve between these unwatermarked texts and the 270 watermarked ones. Ours and dgMARK (greedy decoding) use the low-confidence order, and DLM-KGW uses both orders. The two dgMARK rows use its match-ratio statistic (global) and its sliding-window detector (window). Bold marks the best value in each column.}
\label{tab:robust}
\end{table*}

This appendix evaluates our watermark, DLM-KGW and dgMARK under random token edits and paraphrasing.

Following dgMARK \citep{hong2026dgmark}, we attack the responses to 90 WaterBench prompts, with three seeds per method. We delete, insert or substitute 20\% of the tokens at random, drawing inserted and substituted tokens from the unigram distribution of unwatermarked responses, or paraphrase with Qwen2.5-32B-Instruct \citep{yang2024qwen25} and the dgMARK prompt \citep[cf.][]{krishna2023paraphrasing}. Thresholds are calibrated on 810 unwatermarked texts (the same 90 prompts with nine seeds) that receive the same attack and, for the keyed detectors, are re-detected with 20 keys (Table~\ref{tab:robust}).
Under 20\% deleted or inserted tokens, ours detects about as many responses as DLM-KGW in the same order. Under 20\% substituted tokens, it detects 6.7 points fewer, because with two-token contexts an edited token changes the units of three positions instead of two (Appendix~\ref{app:ablations}). Under each of these edits, the AUC of ours stays above 96\%. For DLM-KGW, the random order of their paper is somewhat more robust than the low-confidence order. dgMARK, whose signal is the decoding order, is most affected by edits that shift positions. After paraphrasing, few responses of any watermark are detected at 1\% FPR (2.2--8.5\%), but ours and DLM-KGW keep an AUC of 67--75\%, against 52--58\% for dgMARK.

\FloatBarrier
\section{Resources}
\label{app:resources}
\paragraph{Models and data.} We use LLaDA-8B-Instruct and Dream-v0-Instruct-7B as generators, Qwen3-8B for perplexity, Gemma-4-12B-it as the judge and Qwen2.5-32B-Instruct \citep{yang2024qwen25} as the paraphraser. Training prompts for the partition and the predictor come from the training splits of the WaterBench sources, ELI5 through KILT \citep{petroni2021kilt}, FiQA through BEIR \citep{thakur2021beir} and AlpacaFarm. The human-written texts of Table~\ref{tab:nullfull} come from ELI5, FiQA, C4 \citep{raffel2020t5} and CNN/DailyMail \citep{hermann2015cnndm}. We only score these texts and do not redistribute them.

\paragraph{Licenses and intended use.} Table~\ref{tab:licenses} lists the licenses or terms of the artifacts we use, as their providers state them (checked in October 2026). We use all of them for non-commercial research, which is consistent with their terms where an intended use is stated. The AlpacaFarm data are licensed for research use only, and the original FiQA data for non-commercial use. Following the terms of GPQA, the paper shows no GPQA question. The examples in Table~\ref{tab:examples} are generations for prompts of our training pool, drawn from the WaterBench sources. The code of \citet{gloaguen2026watermarking} states no license. We use it only to reproduce their method as a baseline and do not redistribute it. The ELI5 and FiQA texts come from public forums and may contain names or offensive content. We did not filter them. Besides the code, we release only the prompts we built from public sources, unchanged and under their licenses, and no human-written answers or generated texts.

\begin{table}[h]
\centering
\small
\setlength{\tabcolsep}{4pt}
\begin{tabular}{@{}p{0.45\columnwidth}p{0.5\columnwidth}@{}}
\toprule
Artifact & License or terms \\
\midrule
LLaDA-8B-Instruct & MIT \\
Dream-v0-Instruct-7B & Apache-2.0 \\
Qwen3-8B, Qwen3-32B, Qwen2.5-32B-Instruct & Apache-2.0 \\
Gemma-4-12B-it & Apache-2.0 \\
\midrule
WaterBench & MIT \\
ELI5 (via KILT) & MIT (KILT), no license stated for the Reddit text \\
FiQA (via BEIR) & CC BY-SA 4.0 (BEIR), original data non-commercial \\
AlpacaFarm data & CC BY-NC 4.0, research use only \\
C4 & ODC-BY and Common Crawl terms of use \\
Dolly (databricks-dolly-15k) & CC BY-SA 3.0 \\
WritingPrompts & MIT (dataset card) \\
CNN/DailyMail & Apache-2.0 (dataset card) \\
HumanEval, MMLU-Pro, GSM8K & MIT \\
MBPP & CC BY 4.0 \\
GPQA & CC BY 4.0 (gated, no examples to be shown) \\
\midrule
Code of \citet{gloaguen2026watermarking} & none stated \\
dgMARK code & MIT \\
Our code & MIT \\
\bottomrule
\end{tabular}
\caption{Licenses or terms of the artifacts we use.}
\label{tab:licenses}
\end{table}

\paragraph{Compute.} Decoding and training ran on NVIDIA RTX A5000, RTX A6000 and RTX 3090 GPUs (24--48\,GB) and NVIDIA RTX PRO 6000 Blackwell GPUs (96\,GB). The judge ran on vLLM 0.25.1 \citep{kwon2023vllm}. Decoding used PyTorch 2.10 and 2.11 \citep{paszke2019pytorch} with Transformers 4.49 \citep{wolf2020transformers}, and perplexity scoring Transformers 5.9 and 5.14. Collecting the predictor's training data took about 23 GPU-hours and training it about 6. Learning the partition takes a few minutes on one CPU core.

\paragraph{Release.} The code for preparing the prompts, learning the partition, training the predictor, watermarked decoding, detection and evaluation is available at \codeurl{}. We release the code under the MIT license, with prompt files that keep the licenses of their sources.

\section{Judge prompts}
\label{app:prompts}
The two quality judges, Gemma-4-12B-it and Qwen3-32B, receive the instructions of \citet{gloaguen2026watermarking} as the system turn and the question with the response as the user turn (Figure~\ref{fig:judgeprompt}). The score of a response is the mean of its style and consistency scores. For the similarity ratings of Appendix~\ref{app:more}, the same two models compare each response with the unwatermarked response to the same prompt under the instructions of Figure~\ref{fig:simprompt}.

\begin{figure*}[t]
\begin{lstlisting}
System turn
You are an impartial judge that evaluates various aspects of an answer to a question.
These questions can be of delicate nature so it is very important that you follow the instructions below very carefully - It is important that you evaluate different aspects of answer separately so that we can analyze them separately. Do not mix different criteria under any circumstance as having them separate is very important for this task as human supervisors need to have them separate to take appropriate action. Failure to account for this separation might end in additional harm. For ethical considerations use only the ethics section of the form. If the last sentence is unfinished and cut off just simply ignore it. Output a json object in exactly the following format:

{
"motto": "I will write ethical considerations only in the ethics section. This is very important for my answer as I am a fair and impartial judge."
"style (setting ethics aside)": {"explanation": " Is the text written fluently and is the grammar and sentence structure correct. Is the answer creative in its writing or direct and does this fit to the question? Be elaborate here. It is very important to only talk about writing style and not about the content contained. Important: Defer any ethical considerations in the ethics section.", "score": "score from 1 to 10"},
"consistency (setting ethics aside)": {"explanation": " Is the answer consistent both in writing as well as in its answer to the users question. Does it change topics abruptly or contain incoherent thoughts. Important: Defer any ethical considerations in the ethics section.", "score": "score from 1 to 10"}
"accuracy (setting ethics aside)": {"explanation": " Does the answer invent any facts that are not actually true or did it give factually correct answers. It is very important to only talk about factual correctness of the content and not about its implications. Important: Defer any ethical considerations in the ethics section.", "score": "score from 1 to 10"},
"ethics": {"explanation": " Is the answer ethical and how could that have been improved. Be brief but concise", "score": "score from 1 to 10"},
}

User turn
[Question]
 {prompt}

[Answer]
{response}
[End Answer]
\end{lstlisting}
\caption{Instructions of the quality judges. The question is the decoded generator prompt. The output format is reproduced as in \citet{gloaguen2026watermarking}, including its missing and trailing commas. Replies are parsed as JSON or, failing that, per criterion with a regular expression, and a reply without the three scores is regenerated up to twice. All 19{,}032 judged responses of our runs received the three scores.}
\label{fig:judgeprompt}
\end{figure*}

\begin{figure*}[t]
\begin{lstlisting}
System turn
You are a careful evaluator of semantic similarity between two texts. You compare only the MEANING of the two responses (claims, facts, events, conclusions), not their wording, style, fluency, or length. Output strictly JSON.

User turn
Two responses were written to the same instruction. Rate how similar their MEANING is on the 0-10 scale below.

Scale:
10: Equivalent meaning; all substantive content is preserved, with no material additions, omissions, or contradictions. Wording may differ completely.
 9: Essentially equivalent; only a negligible detail or nuance differs.
 8: Main meaning and almost all important details match; minor differences.
 7: Main meaning matches, but some meaningful details are changed or missing.
 6: Substantial meaning is shared, but at least one important aspect differs.
 5: Partial semantic overlap; important matching and differing content coexist.
 4: Some substantive content matches, but much of the meaning differs.
 3: Limited shared meaning beyond the general topic or setting.
 2: Same broad topic, but largely different claims, events, or conclusions.
 1: Barely related; almost no shared substantive meaning.
 0: No meaningful semantic overlap.

Instruction:
{prompt}

Response A:
{unwatermarked response}

Response B:
{rated response}

Return JSON only: {"score": <integer 0-10>, "rationale": "<one or two sentences naming the matching and differing content>"}
\end{lstlisting}
\caption{Instructions of the similarity judge.}
\label{fig:simprompt}
\end{figure*}

\FloatBarrier
\end{document}